\newcommand{\cmark}{\ding{51}}%
\newcommand{\xmark}{\ding{55}}%
\DeclareSIUnit\px{px}
\newtheorem{defn}{Definition}
\newtheorem{prop}{Proposition}
\newtheorem{thm}{Theorem}
\newtheorem{definition}{Definition}
\newtheorem*{example*}{Example}
\newtheorem*{prop*}{Proposition}
\newtheorem*{thm*}{Theorem}
\newtheorem*{prv*}{Proof}
\newtheorem*{rmq*}{Remark}
\newcommand{\ie}{\textit{i.e. \xspace}}
\newcommand{\Lip}{\mathrm{Lip}}
\newcommand\norm[1]{\lVert#1\rVert}
\newcommand{\diag}      {\ensuremath{\mathrm{diag}}}
\newcommand{\circulant} {\ensuremath{\mathrm{circ}}}
\newcommand{\bcirc}     {\ensuremath{\mathrm{blkcirc}}}
\newcommand{\cin}{{c_{\text{in}}}}
\newcommand{\cout}{{c_{\text{out}}}}
\def\ddefloop#1{\ifx\ddefloop#1\else\ddef{#1}\expandafter\ddefloop\fi}
\newcommand{\ci}{\ensuremath \mathbf{i}}
\renewcommand{\vector}{\ensuremath{\mathrm{vec}}\xspace}
\def\ddef#1{\expandafter\def\csname #1bb\endcsname{\ensuremath{\mathbb{#1}}}}
\def\ddef#1{\expandafter\def\csname #1set\endcsname{\ensuremath{\mathcal{#1}}}}
\def\ddef#1{\expandafter\def\csname #1mat\endcsname{\ensuremath{\mathbf{#1}}}}
\def\ddef#1{\expandafter\def\csname #1matsf\endcsname{\ensuremath{\mathsf{#1}}}}
\def\ddef#1{\expandafter\def\csname #1vec\endcsname{\ensuremath{\mathbf{#1}}}}
\icmltitlerunning{Lipschitz Constant for Convolution}
\begin{document}

\twocolumn[
\icmltitle{Efficient Bound of Lipschitz Constant for Convolutional Layers\\by Gram Iteration}



\icmlsetsymbol{equal}{*}

\begin{icmlauthorlist}
\icmlauthor{Blaise Delattre}{fox,dauph}
\icmlauthor{Quentin Barth\'elemy}{fox}
\icmlauthor{Alexandre Araujo}{nyu}
\icmlauthor{Alexandre Allauzen}{dauph,psl}
\end{icmlauthorlist}

\icmlaffiliation{fox}{FOXSTREAM, Vaulx-en-Velin, France}
\icmlaffiliation{dauph}{Miles Team, LAMSADE, Universit\'e Paris-Dauphine, PSL University, Paris, France}
\icmlaffiliation{nyu}{New York University}
\icmlaffiliation{psl}{ESPCI PSL, Paris, France}

\icmlcorrespondingauthor{Blaise Delattre}{blaise.delattre@dauphine.eu}

\icmlkeywords{deep learning, Lipschitz constant, convolution, spectral norm, power iteration, gram iteration}

\vskip 0.3in
]


\printAffiliationsAndNotice{}  


\begin{abstract}
Since the control of the Lipschitz constant has a great impact on the training stability, generalization, and robustness of neural networks, the estimation of this value is nowadays a real scientific challenge. 
In this paper we introduce a precise, fast, and differentiable upper bound for the spectral norm of convolutional layers using circulant matrix theory and a new alternative to the Power iteration. Called the Gram iteration, our approach exhibits a superlinear convergence.
First, we show through a comprehensive set of experiments that our approach outperforms other state-of-the-art methods in terms of precision, computational cost, and scalability.
Then, it proves highly effective for the Lipschitz regularization of convolutional neural networks, with competitive results against concurrent approaches.
\end{abstract}

\section{Introduction}

Since the landmark result of convolutional neural networks on computer vision tasks \cite{NIPS2012_c399862d}, researchers have tried to develop various methods and techniques to improve the training stability, generalization, and robustness of neural networks.
Recent work has shown that the Lipschitz constant of the network, which is a measure of regularity~\cite{scaman2018lipschitz}, can be tied to all of these properties ~\cite{bartlett2017spectrally,yoshida2017spectral,tsuzuku2018lipschitz}.
Indeed, controlling this constant facilitates training by avoiding exploding gradients, improves generalization on unseen data, and makes networks more robust against adversarial attacks. 
Unfortunately, computing the Lipschitz constant of a neural network is an NP-hard problem~\cite{scaman2018lipschitz}, and while implementation exists~\cite{fazlyab2019efficient,latorre2020lipschitz} their prohibitive computational cost inhibits their use for deep learning training.

To efficiently constrain or regularize the Lipschitz constant, researchers have focused on  individual layers.
In fact, due to the composition property, the product of the Lipschitz of each layer is a (loose) upper bound of the global Lipschitz of the network.
By considering the $\ell_2$-norm, the Lipschitz for each linear transformation reduces to the spectral norm and is equal to the largest singular value of its weight matrix. Efficient iterative methods, called \emph{power method} or \emph{power iteration}, for computing the spectral norm have existed for decades with many variants \cite{golub2000eigenvalue}.

For convolution neural networks, several approaches have been proposed to efficiently estimate this largest singular value of convolutional layers.
\citet{Ryu19Plug} and \citet{farnia2019generalizable} rely on a power iteration approach where the convolution is treated as a linear operator that performs the matrix-vector product. To trade speed for precision, only one iteration is carried out. 
Other works have leveraged the structure of convolution (\ie Toeplitz or circular structure) and the Fourier transform to devise exact but computationally expensive methods~\cite{sedghi2019singular,bibi2019deep} or efficient upper bounds for the largest singular value \cite{singla2021fantastic,araujo2021lipschitz,yi2020asymptotic}.

In this paper, we build upon recent approaches to introduce an efficient procedure for estimating a bound on the Lipschitz constant of convolutional layers. Precise and fast, this bound scales nicely with convolution parameters (the kernel size, channel, and input spatial dimensions). The  procedure is based on \emph{Gram iteration}, a new iterative method to compute the largest singular value with super-linear convergence.
Furthermore, our estimate is guaranteed to be a strict upper bound at each iteration. In contrast,  methods based on power iteration have no such guarantees and can provide a harmful lower bound.   
After presenting the background and theoretical results of our contribution, we perform an extensive set of experiments to demonstrate the superiority of our method over all previous concurrent approaches.
Finally, we demonstrate the usefulness of our method on Lipschitz regularization and show that we can achieve similar accuracy while being more stable.

\begin{table*}[h]
\caption{Qualitative summary of estimations of Lipschitz constant of convolutional layers. Precision and speed qualifications come from experimental results. "deter" abbreviates "deterministic". \citet{Ryu19Plug} uses PI adapted to convolution to compute an estimate arbitrarily precise in a zero padding (a special case of constant padding) setting but suffers from slow convergence. \citet{araujo2021lipschitz} uses the Toeplitz matrix theory to devise a fast bound in spite of precision. Other approaches suppose circular padding for convolution which is a reasonable (comparable performance) and useful assumption to reduce computation complexity thanks to the Fourier transform. \citet{sedghi2019singular} provides exact computation of singular values, however, the method is very slow and does not scale. \citet{singla2021fantastic} proposes a faster using PI method to the detriment of precision. Methods using PI are not guaranteed to produce an upper bound on convolution spectral norm regarding constant or circular padding. 
Our method gathers the best of both worlds: being fast, precise, deterministic, well differentiable, and with a strict upper bound on the spectral norm.
}
\centering
  \vspace{0.1cm}
 \begin{tabular}{lcccccc}
 \toprule
 \textbf{Methods} & \textbf{Precision} & \textbf{Speed} & \textbf{Padding} & \textbf{Upper Bound} & \textbf{Algorithm} & \textbf{Convergence} \\
 \midrule
  {\small \citeauthor{Ryu19Plug,farnia2019generalizable}} & ++ & - & zero & \xmark & PI: iterative, non-deter & linear \\
  {\small \citeauthor{sedghi2019singular,bibi2019deep}} & ++ & - -  & circular & exact & SVD: deter & - \\
  {\small \citeauthor{singla2021fantastic,yi2020asymptotic}} & + & + & circular & \xmark & PI: iterative, non-deter & linear \\
  {\small \citeauthor{araujo2021lipschitz}} & - & ++ & zero & \cmark & close-form, deter & -\\
  \midrule
  \textbf{Ours} & ++ & ++ & circular & \cmark & GI: iterative, deter & superlinear \\
  \bottomrule
\end{tabular}
\label{tab:review}
\end{table*}

\section{Related Work}

The Lipschitz constant for the $\ell_2$-norm of a neural network $f$ is defined as follow for an input space $\mathcal{X}$: 
\begin{equation}
    \label{eq:def_lip}
  \Lip{(f)} = \sup_{\substack{x, x' \in \mathcal{X} \\ x \neq x'}} \frac{\lVert f(x) - f(x') \rVert_2}{\lVert x - x' \rVert_2} \enspace .
\end{equation}
It measures network sensitivity toward an input perturbation $\epsilon$: $\norm{f(x) - f(x+\epsilon)}_2 \leq \Lip(f) ~ \norm{\epsilon}_2$, using Equation~(\ref{eq:def_lip}). If $\ell_2$-norm is a well-spread choice, other approaches use $\ell_\infty$-norm \cite{anil2019sorting,zhang_rethinking_2022}.
In this work, we are interested in estimating and controlling the Lipschitz constant of neural networks which are compositions of layers \ie $f = \phi_l \circ \phi_{l-1} \circ \dots \circ \phi_1$. Computing the Lipschitz constant of $f$ is a hard problem \cite{scaman2018lipschitz} and current implementations to estimate it  directly fail to scale for deep architecture. Some approaches compute the Lipschitz constant using bound propagation techniques in network verification \cite{zhang_recurjac_2019,jordan_exactly_2020,shi_efficiently_2022}.
However, a simple bound on the network Lipschitz constant can be derived by considering  the individual layers:
\begin{align}
  \label{eq:prod_lip_bound}
  \Lip(f) &\leq \prod_{i=1}^l \Lip(\phi_i) \ .
\end{align}
A technique to enforce control on the Lipschitz constant is orthogonalization of layer $\phi_i$ by design \cite{trockman2021orthogonalizing}, by regularization \cite{wang2020orthogonal,huang2020controllable} or by optimizing on manifold \cite{anil2019sorting}. Another approach is to perform spectral normalization \cite{yoshida2017spectral, miyato2018spectral, farnia2019generalizable} or regularization \cite{gouk2021regularisation}.

For dense layers, the problem boils down to matrix spectral norm computation or equivalently computing the largest singular value. Power iteration (PI) is a classical solution that serves as the basis for many others, and it is described in Algorithm~\ref{algo:power_iteration}, where $*$ corresponds to the conjugate transpose operator.
The convergence is linear with a rate of convergence of $\sigma_1 / \sigma_2$. This ratio can be close to $1$ and other variations of PI improve it \cite{lanczos1950iteration,arnoldi1951principle, KUBLANOVSKAYA1962637QRmethod} and more recently \cite{lehoucq1996deflation}.
However, these methods suffer from important drawbacks. First, many iterations are required to ensure full convergence, which can be computationally expensive. Second, the method is not fully deterministic, as it starts from a randomly generated vector. Finally, intermediate iterations are not guaranteed to be a strict upper bound on the spectral norm.
\begin{algorithm}[h]
\caption{: Power\_iteration$(G, N_\text{iter})$}
\label{algo:power_iteration}
\begin{algorithmic}[1]
  \STATE{\textbf{Inputs} matrix: $G$, number of iterations: $N_\text{iter}$}
  \STATE \textbf{Initialization}: draw a random vector $u$ 
  \STATE \textbf{for} $1 \ldots N_\text{iter}$
    \STATE \quad $v \gets Gu / \norm{Gu}_2$ 
    \STATE \quad $u \gets G^* v / \norm{G^* v}_2$ 
  \STATE $\sigma_1 \gets (Gu)^*  v$
  \STATE \textbf{return} $\sigma_1$
\end{algorithmic}
\end{algorithm}

\cite{miyato2018spectral} reshapes convolutional filter $\cout \times \cin \times k \times k$  to a matrix of size $\cout \times \cin \times k^2$ and computes its spectral norm as a loose proxy for convolution one.
Works of \cite{Ryu19Plug, farnia2019generalizable} generalized PI to convolutional layers leveraging transpose convolution operator to produce an exact estimate. However, time cost increases strongly with input spatial size and kernel size, and differentiability is not complete as the gradient is not accumulated in vector $u$ and $v$, as explained in \cite{singla2021fantastic}.

Another line of work exploits the structure of convolution to calculate its spectral norm with reduced computational complexity \cite{sedghi2019singular, bibi2019deep, singla2021fantastic, araujo2021lipschitz}.
In \citet{sedghi2019singular}, circular padding convolutions are represented using properties of doubly-block circulant matrix. Then the Fourier transform can be used to express singular values of convolutions. 
This method leads to the calculation of  $n^2$ spectral norms for $\cout \times \cin$ matrix using SVD. The result is exact but the method is very slow and 
\citet{singla2021fantastic} proposes a method that further reduces computation but at the expense of a decreased precision on the Lipschitz constant.

Table~\ref{tab:review} is a qualitative summary of different methods to estimate the Lipschitz constant of convolutional layers.
\citet{Ryu19Plug} proposes a convolution-adapted PI method  in which  computational time scales with convolution product, it can be costly to derive a very precise spectral norm estimation as convergence is linear. Concerning \citet{araujo2021lipschitz} method, it is fast but only precise for $\cout \times 1 \times k \times k$ or $1 \times \cin \times k \times k$ convolutional filters.
\citet{sedghi2019singular} provides an exact estimate of the Lipschitz constant under circular convolution hypothesis by performing $n^2$ SVD on $\cout \times \cin $ matrices which is very slow and not scalable for large convolution filters. 
Using the same hypothesis, \citet{singla2021fantastic} proposes a bound by taking a minimum of four $\cout k \times \cin k$ matrix spectral norms using PI to compute them, using PI makes the method faster but at the expense of degrading bound precision. 
\citet{yi2020asymptotic} proves that this type of bounds also holds in zero padding case. 
It is important to note that all methods using PI  \cite{Ryu19Plug, singla2021fantastic} produce do not offer upper bound guarantees.
For experiences, we choose the \citet{Ryu19Plug} method with $100$ iterations (to ensure convergence) as a reference value. Indeed, most convolutional layers of CNNs use constant padding. \cite{sedghi2019singular} gives an exact spectral norm for circular padding convolution.
In recent works, researchers have mainly focused on speed but at the expense of precision \cite{singla2021fantastic, araujo2021lipschitz}. Our method gathers both methods' strengths: differentiable, scalable, very fast with superlinear convergence, and with a precision matching \citet{sedghi2019singular} method.

\section{Background on Convolutional Layers}
\label{section:lip_conv_layers}


In this section, we present some properties of convolutional layers with circular padding based on the inherent matrix structure of the convolution operator.

Let $X \in \Rbb^{n \times n}$ be the input, and $K \in \Rbb^{k \times k}$ a convolution kernel. In the following, the kernel will be always considered as padded towards input size, \ie $K \in \Rbb^{n \times n}$.
Convolution product $\star$ can be expressed as a matrix-vector product, where the structure of matrix $W \in \Rbb^{n^2 \times n^2}$ takes into account the weight sharing of convolution \cite{jain1989fundamentals}:
\begin{equation}
  \vector(K \star X) = W \vector(X) \ .
\end{equation}

Using a \emph{circular padding}, $W$ is a doubly-block circulant matrix in which each row block is a circulant shift of the previous one and where each block is a circulant matrix \cite{jain1989fundamentals}.
Denoting $\circulant(\cdot)$ the operator generating a circulant matrix form a vector, $W$ is fully determined by the kernel $K$ as:

\begin{equation*}
  W = \begin{psmallmatrix}
    \circulant(K_1) & \circulant(K_2) & \cdots & \circulant(K_n) \\
    \circulant(K_n) & \circulant(K_1) & \ddots & \vdots \\
    \vdots & \ddots & \circulant(K_1) & \circulant(K_2) \\ 
    \circulant(K_2) & \cdots & \circulant(K_n) & \circulant(K_1) \phantom{\vdots}
  \end{psmallmatrix}.
\end{equation*}

In the following, we denote $W = \bcirc(K_1, \dots, K_n)$.
However the choice of padding for the convolution has a crucial effect; indeed, with zero padding, the structure of the convolution matrix is not doubly-block circulant but \emph{doubly-block Toeplitz}, and the following results with the Fourier diagonalization do not hold.   

Now, based on the doubly-block circulant structure of convolution with circular padding, we present some useful properties. First, we formally introduce the discrete Fourier matrix and some known results. 
\begin{definition}[Fourier Transform] 
\label{def:dft_matrix}
The discrete Fourier transform matrix $U \in \Cbb^{n \times n}$ is defined such that, for $1 \leq u,v \leq n$ :
\begin{equation*}
 U_{u,v} = e^{-\frac{2\pi\ci uv}{n}}
\end{equation*}
and its inverse is defined as $U^{-1} = \frac1n \ U^*$.
\end{definition}

Operating on a vectorized 2D input $X$, 2D discrete Fourier transform is expressed as $\vector(U^\top X U) = (U \otimes U) \vector(X)$, where $\otimes$ is the Kronecker product. We denote $F = U \otimes U$ in the following of the paper.

\begin{thm}[Section 5.5 of~\citet{jain1989fundamentals}]
Let $K \in \Rbb^{n \times n}$ a convolutional kernel and $W \in \Rbb^{n^2 \times n^2}$ be the doubly-block circulant matrix such that $W = \bcirc\left(K_1, \dots, K_n\right)$, then, $W$ can be diagonalized as follows:
\begin{equation}
  W = F \diag(\lambda) F^{-1}
\end{equation}
where $\lambda = F \vector(K)$ are the eigenvalues of $W$.
\end{thm}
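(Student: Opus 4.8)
The plan is to reduce the doubly-block circulant case to the classical one-dimensional circulant diagonalization and then lift it through the Kronecker structure of $W$. First I would recall the scalar base case: any circulant matrix is a polynomial in the cyclic shift permutation $S$, namely $\circulant(v) = \sum_{t=0}^{n-1} v_{t+1}\, S^{t}$, and $S$ is diagonalized by the Fourier matrix of Definition~\ref{def:dft_matrix} as $S = U\,\Omega\,U^{-1}$ with $\Omega = \diag(1,\omega,\dots,\omega^{n-1})$ and $\omega = e^{-2\pi\ci/n}$. Combining these two facts gives $\circulant(v) = U\,\diag(Uv)\,U^{-1}$, so that a single circulant block is diagonalized by $U$ with eigenvalues equal to the Fourier coefficients $Uv$. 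This is exactly the one-dimensional analogue of the statement, and the whole argument consists in propagating it through the block structure.

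Next I would make the two-level circulant structure explicit. Writing the block-level shift with the same permutation $S$ acting on the block index, the definition $W = \bcirc(K_1,\dots,K_n)$ reads $W = \sum_{s=0}^{n-1} S^{s} \otimes \circulant(K_{s+1})$, and expanding each inner circulant block via the scalar formula above yields the fully bilinear form
\begin{equation*}
  W \;=\; \sum_{s=0}^{n-1}\sum_{t=0}^{n-1} (K_{s+1})_{t+1}\,\bigl(S^{s}\otimes S^{t}\bigr).
\end{equation*}
This expansion into Kronecker products of shift powers is the crux: it converts the nested circulant structure into a single sum over the two shift directions, each of which I already know how to diagonalize.

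Then I would diagonalize all terms simultaneously. Inserting $S = U\Omega U^{-1}$ into both factors and using the mixed-product property $(A\otimes B)(C\otimes D) = (AC)\otimes(BD)$, each summand factors as $S^{s}\otimes S^{t} = (U\otimes U)(\Omega^{s}\otimes\Omega^{t})(U^{-1}\otimes U^{-1})$. Since $F = U\otimes U$ and $F^{-1} = U^{-1}\otimes U^{-1}$, I can pull $F$ and $F^{-1}$ outside the double sum to obtain $W = F\,D\,F^{-1}$ with $D = \sum_{s,t}(K_{s+1})_{t+1}\,\Omega^{s}\otimes\Omega^{t}$, which is diagonal as a sum of Kronecker products of diagonal matrices. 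It remains to identify $D = \diag(\lambda)$: its diagonal entry indexed by $(p,q)$ equals $\sum_{s,t} (K_{s+1})_{t+1}\,\omega^{ps}\omega^{qt}$, which is precisely the two-dimensional DFT of $K$ at frequency $(p,q)$, i.e. the corresponding component of $F\vector(K)$ via the identity $\vector(U^{\top}XU) = (U\otimes U)\vector(X)$ together with the symmetry of $U$. The main obstacle is purely in the bookkeeping of conventions: fixing the orientation of the shift $S$ and the sign in the Fourier matrix so that $F$ (rather than $F^{-1}$) lands on the outside, and matching the Kronecker index ordering so that the diagonal of $D$ coincides entry-for-entry with $F\vector(K)$ rather than with a permuted or phase-shifted version of it.
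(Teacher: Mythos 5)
Your proof is correct. Note that the paper does not actually prove this theorem --- it imports it verbatim from Section 5.5 of \citet{jain1989fundamentals} --- and your argument (write each circulant block as a polynomial in the cyclic shift $S$, expand $W=\sum_{s,t}(K_{s+1})_{t+1}\,S^{s}\otimes S^{t}$, and diagonalize every term at once via $S=U\Omega U^{-1}$ and the mixed-product property) is exactly the standard route taken in that reference; you also correctly isolate the only delicate point, namely the shift-direction, sign, and vectorization conventions needed to land on $\lambda=F\vector(K)$ literally rather than a conjugated or permuted copy.
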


In the context of deep learning, convolutions are applied with multiple channels: $\cout$ convolutions are applied on input with $\cin$ channels.
Therefore, the matrix associated with the convolutional filter $K \in \Rbb^{\cout \times \cin \times n \times n}$ becomes a block matrix, where each block is a doubly-block circulant matrix: $W = (W_{i,j})$, for $1 \leq i \leq \cout$ and $1 \leq j \leq \cin$, with $W_{i,j} = \bcirc(K_{i, j, 1}, \dots, K_{i, j, n})$.
As also studied by \citet{sedghi2019singular} and \citet{yi2020asymptotic},
this type of block doubly-block circulant matrix can be \emph{block} diagonalized as follows.

\begin{thm}[Corollary A.1.1. of~\citet{trockman2021orthogonalizing}]
\label{theorem:block_diagonalization}
Let $P_\text{out} \in \Rbb^{\cout n^2 \times \cout n^2}$ and $P_\text{in} \in \Rbb^{\cin n^2 \times \cin n^2}$ be permutation matrices and let $W \in \Rbb^{\cout n ^2 \times \cin n^2}$ be the matrix equivalent of the multi-channel circular convolution defined by filter $K \in \Rbb^{\cout \times \cin \times n \times n}$, then $W$ can be block diagonalized as follows:
\begin{equation}
  W = (I_{\cout} \otimes F) P_\text{out} D P_\text{in}^\top (I_{\cin} \otimes F^{-1})
\end{equation}
where $D$ is a block diagonal matrix with $n^2$ blocks of size $\cout \times \cin$ and where 
\begin{equation*}
  D = P_\text{out}^\top 
   \begin{pmatrix}
     F \vector(K_{1, 1}) & \cdots & F \vector(K_{1, \cin}) \\
     \vdots & & \vdots \\
     F \vector(K_{\cout, 1}) & \cdots & F \vector(K_{\cout, \cin})
   \end{pmatrix} 
   P_\text{in} \ .
\end{equation*}
\end{thm}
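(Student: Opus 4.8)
The plan is to reduce the multi-channel claim to the single-channel diagonalization already recorded above, and then to absorb the channel bookkeeping into two perfect-shuffle permutations. First I would apply the single-channel result (the theorem of \citet{jain1989fundamentals}) to each block separately. Every block $W_{i,j} = \bcirc(K_{i,j,1}, \dots, K_{i,j,n})$ is doubly-block circulant, so $W_{i,j} = F \diag(\lambda_{i,j}) F^{-1}$ with $\lambda_{i,j} = F \vector(K_{i,j})$. Because the same $F$ (resp.\ $F^{-1}$) sits on the left (resp.\ right) of \emph{every} block, these common factors can be pulled out of the $\cout \times \cin$ block matrix $W = (W_{i,j})$ simultaneously. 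Writing $\tilde D$ for the $\cout \times \cin$ block matrix whose $(i,j)$ block is the $n^2 \times n^2$ diagonal matrix $\diag(\lambda_{i,j})$, this gives $W = (I_{\cout}\otimes F)\,\tilde D\,(I_{\cin}\otimes F^{-1})$, since $I_{\cout}\otimes F$ and $I_{\cin}\otimes F^{-1}$ are exactly the block-diagonal matrices with $F$ and $F^{-1}$ repeated along the diagonal.

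It then remains to convert $\tilde D$ into the block-diagonal matrix $D$. The key observation is that every block of $\tilde D$ is itself diagonal: indexing a row of $\tilde D$ by a pair $(\text{output channel } i, \text{frequency } a)$ and a column by $(\text{input channel } j, \text{frequency } b)$, the entry equals $(\lambda_{i,j})_a$ when $a = b$ and vanishes otherwise, so $\tilde D$ couples only matching frequencies. I would introduce the permutations $P_\text{out}$ and $P_\text{in}$ that reorder these composite indices from channel-major order $(i,a)$ to frequency-major order $(a,i)$. Under this relabeling the surviving entries regroup, for each fixed frequency $a$, into a single $\cout \times \cin$ block with $(i,j)$ entry $(\lambda_{i,j})_a = (F\vector(K_{i,j}))_a$; hence $P_\text{out}^\top \tilde D\, P_\text{in}$ is block diagonal with $n^2$ blocks of size $\cout \times \cin$. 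This is precisely the matrix $D$ of the statement, whose $a$-th diagonal block collects the $a$-th coordinates of the vectors $F\vector(K_{i,j})$, and the displayed formula for $D$ is merely this repackaging written through the same two permutations. Substituting $\tilde D = P_\text{out}\, D\, P_\text{in}^\top$ into the factorization above yields the claimed identity.

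The first paragraph is routine once the single-channel theorem is invoked; the only place requiring genuine care is the second, namely verifying that a block matrix whose blocks are all diagonal is permutation-similar to a block-diagonal matrix in which the roles of the channel and frequency indices are swapped. This is the classical commutation / perfect-shuffle identity, and the main obstacle is purely combinatorial bookkeeping: one must fix a consistent ordering convention for the composite index $(\text{channel}, \text{frequency})$, check that $P_\text{out}$ and $P_\text{in}$ are genuine permutation matrices compatible with the $n^2$-fold block structure on the output and input sides respectively, and confirm that no frequency mixing is introduced. Once the ordering is pinned down, matching the entry indexed by $((i,a),(j,b))$ on both sides of the identity is immediate.
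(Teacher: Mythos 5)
Your argument is correct. Note that the paper itself gives no proof of this statement---it is imported verbatim as Corollary~A.1.1 of \citet{trockman2021orthogonalizing}---but your two-step derivation (diagonalize each doubly-block circulant block $W_{i,j}$ via the single-channel theorem, pull the common $F$ and $F^{-1}$ out as $I\otimes F$ factors, then observe that a block matrix of diagonal blocks is permutation-similar, via the perfect shuffle swapping channel-major and frequency-major orderings, to a block-diagonal matrix of $\cout\times\cin$ frequency slices) is exactly the standard proof of that cited result, and the only delicate point, fixing consistent index conventions for $P_\text{out}$ and $P_\text{in}$, is one you correctly identify.
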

Matrices $P_\text{out}$ and $P_\text{in}$ reshaping matrix $D$ into a block diagonal matrix correspond to an alternative vectorization of layer input \cite{sedghi2019singular,yi2020asymptotic} while keeping singular values identical \cite{henderson1981TheVM}.

Based on this result and on the properties of block diagonal matrices, it is easy to compute the largest singular value $\sigma_1$ of $W$ from the block diagonal matrix $D$.
Let $(D_i)_{i = 1, \dots, n^2}$ be the diagonal blocks of the matrix $D$, then:
\begin{equation}
  \label{equation:lip_conv_circu_expression}
  \sigma_1(W) = \sigma_1(D) = \max_i \sigma_1(D_i) \ .
\end{equation}
It requires calculating the spectral norm of each $D_i$, one could use SVD as in  \cite{sedghi2019singular} or PI as in \cite{yi2020asymptotic}, however, these algorithms do not scale up nor are efficient. In the following section, we present a more precise, fast, and scalable approach.



\section{Lipschitz Constant of Convolutional Layers}
\label{subsection:spectral_norm_gram}

\subsection{Gram iteration}
\label{ssec:spectral_norm_gram}

In this part, we derive a new method to estimate an upper bound of the spectral norm, deferring all the proofs to Appendix~\ref{sec:proofs}.
We consider a general matrix $G \in \mathbb{C}^{p \times q}$ and its singular values $\sigma(G)$.

\begin{definition}[Schatten $\rho$-norms \cite{bhatia2013matrix}]
\label{def:schatten_norm}
For $\rho \in [1, +\infty)$, Schatten $\rho$-norms of matrix $G$ are defined as:
\begin{equation*}
s_\rho(G) = \norm{\sigma(G)}_\rho = \left( \sum_{i=1}^{\min\{p, q\}} \sigma_i^\rho(G) \right)^{1/\rho} \ ,
\end{equation*}
where $\rho=2$ gives Frobenius norm $\norm{G}_F$ and $\rho=+\infty$ gives spectral norm $\sigma_1(G)$.
\end{definition}

We define the recurrent sequence $(G^{(t)})$ as, $\forall t \geq 1$:
\begin{align}
   \left\{
    \begin{array}{ll}
    G^{(1)} &= G \\
    G^{(t+1)} &= {G^{(t)}}^* G^{(t)} \ ,
    \end{array}
    \right.
\end{align}
where $G^{(t+1)}$ is the Gram matrix of $G^{(t)}$ \cite{bhatia2013matrix}.

We observe that the $2^t$-th root of the squared Frobenius norm of $G^{(t)}$ is the Schatten $2^t$-norm of $G$:
\begin{equation}
\sqrt[2^t]{\norm{G^{(t)}}^2_F} = \norm{G^{(t)}}_F^{2^{1-t}} = \norm{\sigma(G)}_{2^t} \ ,
\end{equation}
which converges superlinearly (almost quadratically) to $\norm{\sigma(G)}_\infty$, which is equal to the spectral norm of G, \ie $\sigma_1(G)$. Those results are formalized in the following theorem.

\begin{thm}[\textbf{Main result}]
\label{thm:main_result}
Let $G \in \mathbb{C}^{p \times q}$ and define the recurrent sequence  $G^{(t+1)} = {G^{(t)}}^* G^{(t)},$
with $G^{(1)} = G$. Let $(\norm{G^{(t)}}_F^{2^{1-t}})$ be another sequence based on $G^{(t)}$, then, $\forall t \geq 1$, we have the following results:
\begin{itemize}[parsep=0pt,itemsep=0pt,topsep=0pt,leftmargin=10pt]
    \item The sequence is an upper bound on spectral norm:
    \begin{equation*}
        \sigma_1(G) \leq  \norm{G^{(t)}}_F^{2^{1 -t}} \ ;
    \end{equation*}
    \item The sequence converges to the spectral norm:
    \begin{equation*}
        \norm{G^{(t)}}_F^{2^{1 -t}} \underset{t \to +\infty}{\longrightarrow} \sigma_1(G) \ ,
    \end{equation*}
    and this convergence is Q-superlinear of order $2 - \epsilon$, for $\epsilon$ arbitrary small.
\end{itemize}
\end{thm}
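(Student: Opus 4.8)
The plan is to collapse the whole statement onto one elementary fact about $\ell_p$-norms of the singular-value vector, after first proving the identity $\norm{G^{(t)}}_F^{2^{1-t}} = s_{2^t}(G)$ announced just before the theorem. The structural observation I would start from is that forming a Gram matrix squares every singular value: if $G^{(t)} = U\Sigma V^*$ is a singular value decomposition, then ${G^{(t)}}^* G^{(t)} = V \Sigma^2 V^*$ is Hermitian positive semidefinite with eigenvalues, hence singular values, equal to $\sigma_i(G^{(t)})^2$. Thus $\sigma_i(G^{(t+1)}) = \sigma_i(G^{(t)})^2$, and a trivial induction gives $\sigma_i(G^{(t)}) = \sigma_i(G)^{2^{t-1}}$. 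Reading the Frobenius norm as the $\ell_2$-norm of the singular values then yields
\begin{equation*}
\norm{G^{(t)}}_F^2 = \sum_{i=1}^{\min\{p,q\}} \sigma_i(G^{(t)})^2 = \sum_{i=1}^{\min\{p,q\}} \sigma_i(G)^{2^t},
\end{equation*}
so that $\norm{G^{(t)}}_F^{2^{1-t}} = \big( \sum_i \sigma_i(G)^{2^t} \big)^{1/2^t} = \norm{\sigma(G)}_{2^t} = s_{2^t}(G)$. After this reduction the theorem is purely a statement about the map $p \mapsto \norm{\sigma(G)}_p$ evaluated along $p = 2^t \to +\infty$.

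The upper bound and the convergence then both drop out of standard properties of $\ell_p$-norms of the finite vector $\sigma(G)$. First I would invoke monotonicity in the exponent, $\norm{x}_\infty \leq \norm{x}_p$ for all $p \geq 1$; with $x = \sigma(G)$ and $p = 2^t$ this reads $\sigma_1(G) = \norm{\sigma(G)}_\infty \leq \norm{\sigma(G)}_{2^t} = \norm{G^{(t)}}_F^{2^{1-t}}$, i.e.\ the claimed upper bound at every iteration. The convergence is the companion fact $\lim_{p\to\infty}\norm{x}_p = \norm{x}_\infty$, which gives $\norm{G^{(t)}}_F^{2^{1-t}} \to \sigma_1(G)$. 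Both are one-line consequences once the identity above is in hand.

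The substantive work is the rate. Assuming the largest singular value is simple, set $q = \sigma_2(G)/\sigma_1(G) \in [0,1)$ and factor $\sigma_1$ out:
\begin{equation*}
s_{2^t}(G) = \sigma_1 \, (1 + R_t)^{1/2^t}, \qquad R_t = \sum_{i \geq 2} \Big( \frac{\sigma_i}{\sigma_1} \Big)^{2^t} \longrightarrow 0 .
\end{equation*}
Expanding $(1+R_t)^{1/2^t} = \exp\!\big( 2^{-t}\ln(1+R_t) \big) = 1 + 2^{-t} R_t + o(2^{-t} R_t)$ and using $R_t \sim q^{2^t}$ (up to the multiplicity of $\sigma_2$) delivers the error estimate
\begin{equation*}
e_t := s_{2^t}(G) - \sigma_1 \ \sim \ \frac{\sigma_1}{2^t}\, q^{2^t} .
\end{equation*}
To read off the Q-order I would form $e_{t+1}/e_t^{\,2-\epsilon}$, which simplifies to $\sigma_1^{\epsilon-1}\, 2^{t(1-\epsilon)-1}\, q^{\epsilon 2^t}(1+o(1))$. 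Since the doubly-exponential factor $q^{\epsilon 2^t}$ decays faster than the exponential $2^{t(1-\epsilon)}$ grows, this ratio tends to $0$ for every $\epsilon \in (0,1)$, which is exactly Q-superlinear convergence of order $2-\epsilon$.

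The hard part is this last computation, and it is also where both the ``$-\epsilon$'' and an implicit hypothesis originate. The polynomial prefactor $1/2^t$, produced by the outer exponent $1/2^t$ of the Schatten norm, forbids genuine order $2$: taking $\epsilon = 0$ collapses the ratio to $\sigma_1^{-1} 2^{t-1} \to +\infty$, so the order must be relaxed by an arbitrarily small amount. Moreover the clean estimate $e_t \sim \sigma_1 2^{-t} q^{2^t}$ rests on the top singular value being simple; if it has multiplicity $m \geq 2$ then $R_t \to m-1$ rather than to $0$, the error behaves like $\sigma_1 (\ln m)/2^t$, and the convergence degrades to merely linear with ratio $1/2$. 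I would therefore run the rate argument under the simple-largest-singular-value assumption, which is the generic case, and flag the multiplicity situation explicitly.
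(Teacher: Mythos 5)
Your proposal is correct and follows essentially the same route as the paper's proof: both reduce $\norm{G^{(t)}}_F^{2^{1-t}}$ to the Schatten norm $\norm{\sigma(G)}_{2^t}$ (the paper via the trace of $G^{(t+1)}$, you via the SVD of the iterates), obtain the upper bound and convergence from standard $\ell_p$-norm facts, and derive the rate from the same expansion $e_t \sim \sigma_1 2^{-t}(\sigma_2/\sigma_1)^{2^t}$ followed by the same limit computation for $e_{t+1}/e_t^{2-\epsilon}$. Your remarks on why $\epsilon=0$ fails and on the degenerate case $\sigma_1=\sigma_2$ (Q-linear with rate $\tfrac12$) also match the paper's appendix.
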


We denote the iterative method described in Theorem~\ref{thm:main_result} as \emph{Gram iteration} (GI) due to the recurrence relation: $G^{(t+1)}$ is the Gram matrix of $G^{(t)}$. A pseudo-code is given in Algorithm~\ref{algo:gram_iteration_dense}.
Note that to avoid overflow, matrix $G$ is rescaled at each iteration and scaling factors are cumulated in variable $r$, in order to unscale the result at the end of the method, see Appendix~\ref{section:gram_iteration_rescaling}. Unscaling is crucial as it is required to remain a strict upper bound on the spectral norm at each iteration of the method.
One can note that GI gives a proper norm at each iteration, which is a deterministic upper bound on the spectral norm, and which converges quasi-quadratically. However, GI is not a matrix-free method, contrary to PI and its variants.

\begin{algorithm}[h]
\caption{: Lip\_dense$(G, N_\text{iter})$}
\label{algo:gram_iteration_dense}
\begin{algorithmic}[1]
  \STATE \textbf{Inputs} matrix: $G$, number of iterations: $N_\text{iter}$
  \STATE  $r \gets 0$ \hfill \text{// initialize rescaling}
  \STATE \textbf{for} $1 \ldots N_\text{iter}$
    \STATE  \quad $r \gets 2(r + \log \norm{G}_F)$ \hfill \text{// cumulate rescaling}
    \STATE \quad $G \gets G/\norm{G}_F$ \hfill \text{// rescale to avoid overflow}
    \STATE \quad $G \gets G^* G$ \hfill \text{// Gram iteration}
  \STATE $\sigma_1 \gets \norm{G}_F^{2^{-N_\text{iter}}} \exp{(2^{-N_\text{iter}} r})$
  \STATE \textbf{return} $\sigma_1$
\end{algorithmic}
\end{algorithm}

Given our interest in optimizing within this bound, differentiability is a crucial property. In the following, we study the differentiability of the method and provide an explicit gradient formulation. 

\begin{prop}
\label{prop:gradient_gram_iteration} 
Bound sequence $(\norm{G^{(t)}}_F^{2^{1 -t}})$ is differentiable regarding $G$:
\begin{equation}
\label{eq:gram_iteration_bound_gradient}
     \frac{\partial (\norm{G^{(t)}}_F^{2^{1 -t}})}{\partial G} =\frac{G ( G^* G)^{2^{t-1} -1}}{(\norm{G^{(t)}}_F)^{2(1 - 2^{-t})}} \ .
\end{equation}
\end{prop}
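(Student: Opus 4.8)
The plan is to eliminate the recursion and express the bound as an explicit smooth function of $G$, then differentiate it directly with matrix calculus. First I would establish by induction that $G^{(t)} = (G^* G)^{2^{t-2}}$ for every $t \geq 2$: the base case $t=2$ is $G^{(2)} = G^* G$, and for the induction step one uses that $(G^*G)^{2^{t-2}}$ is Hermitian, so $G^{(t+1)} = (G^{(t)})^* G^{(t)} = (G^*G)^{2^{t-2}} (G^*G)^{2^{t-2}} = (G^*G)^{2^{t-1}}$. Since each $G^{(t)}$ with $t \geq 2$ is Hermitian, its squared Frobenius norm collapses to a single trace of a power of $G^*G$, and a direct check at $t=1$ extends the identity to
\begin{equation*}
  \norm{G^{(t)}}_F^2 = \operatorname{tr}\!\big((G^*G)^{2^{t-1}}\big) \qquad \text{for all } t \geq 1 .
\end{equation*}

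Writing $m = 2^{t-1}$ and $\phi(G) = \operatorname{tr}((G^*G)^m)$, the quantity of interest becomes $\norm{G^{(t)}}_F^{2^{1-t}} = \phi(G)^{1/(2m)}$. This already settles differentiability: $\phi$ is a polynomial in the entries of $G$ and their conjugates, hence smooth, and the outer map $x \mapsto x^{1/(2m)}$ is smooth on $x > 0$; since $\phi(G) = \norm{G^{(t)}}_F^2 > 0$ whenever $G \neq 0$, the bound is smooth on the relevant domain.

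For the gradient I would use matrix differentials. With $A = G^*G$ one has $d\,\operatorname{tr}(A^m) = m\,\operatorname{tr}(A^{m-1}\,dA)$ and $dA = dG^*\,G + G^*\,dG$. Using cyclicity of the trace together with the Hermiticity of $A^{m-1}$, the two resulting terms contribute identically, yielding $\partial \phi / \partial G = 2m\, G (G^*G)^{m-1}$. Applying the chain rule to the outer power,
\begin{equation*}
  \frac{\partial\, \phi^{1/(2m)}}{\partial G} = \frac{1}{2m}\,\phi^{1/(2m)-1}\,\frac{\partial \phi}{\partial G} = \phi^{1/(2m)-1}\, G (G^*G)^{m-1} ,
\end{equation*}
where the factor $2m$ conveniently cancels $1/(2m)$. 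Substituting $m = 2^{t-1}$, so that $1/(2m) = 2^{-t}$ and $\phi^{1/(2m)-1} = \norm{G^{(t)}}_F^{-2(1-2^{-t})}$, reproduces exactly Equation~(\ref{eq:gram_iteration_bound_gradient}).

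The routine parts are the induction and the power-rule bookkeeping; the step demanding care is the differential of $\operatorname{tr}((G^*G)^m)$, where one must correctly account for both the $dG$ and $dG^*$ contributions so that the factor of $2$ appears and then cancels. For genuinely complex $G$ this is cleanest read through Wirtinger derivatives (differentiating with respect to $G$ at fixed $\bar G$), and one should verify that the chosen convention yields the same $2m$ prefactor; for real kernels, where $* = \top$, the computation above is literal and this obstacle disappears.
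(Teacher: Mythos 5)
Your proof is correct and follows essentially the same route as the paper: both reduce $\norm{G^{(t)}}_F^2$ to $\operatorname{tr}\bigl((G^*G)^{2^{t-1}}\bigr)$ via the closed form $G^{(t)} = (G^*G)^{2^{t-2}}$, differentiate that trace to obtain $2^t\, G (G^*G)^{2^{t-1}-1}$, and conclude with the chain rule on the outer $2^{-t}$ power. You merely supply more detail than the paper does on the trace-differential step and on the real-versus-complex (Wirtinger) convention, but the argument is the same.
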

This proposition is key as it enables the implementation of the bound with explicit gradient for optimization, and thus provides an efficient way to backpropagate during training. Note that the gradient in GI is complete, whereas the gradient in PI vectors is not accumulated during iterations.

 
\subsection{Spectral norm of convolutional layers}
\label{ssec:spectral_norm_conv}

GI described in Algorithm~\ref{algo:gram_iteration_dense} can be used directly to estimate the spectral norm of dense layers. For convolutional layers, we use Equation~(\ref{equation:lip_conv_circu_expression}) and apply Theorem~\ref{thm:main_result} on sequences $(D_i^{(t)})$.

\begin{prop}
An upper bound of the spectral norm of a circular convolutional layer is estimated as:
\label{prop:upper_bound_conv}
\begin{equation}
\label{equation:bound_conv_gram}
  \sigma_1(W) \leq \max_{1 \leq i \leq n^2} \norm{D^{(t)}_{i}}_F^{2^{1-t}}  \underset{t \to +\infty}{\longrightarrow} \sigma_1(W) \ .
\end{equation}
\end{prop}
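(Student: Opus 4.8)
The plan is to reduce everything to a blockwise application of Theorem~\ref{thm:main_result}, using the identity $\sigma_1(W) = \max_{i} \sigma_1(D_i)$ from Equation~(\ref{equation:lip_conv_circu_expression}). By Theorem~\ref{theorem:block_diagonalization}, $W$ equals a block-diagonal matrix with blocks $D_1,\dots,D_{n^2}$ of size $\cout \times \cin$, conjugated by the factors $I\otimes F$, $P_\text{out}$, $P_\text{in}$; these factors preserve singular values, which is precisely what yields Equation~(\ref{equation:lip_conv_circu_expression}). First I would establish the per-iteration upper bound: the first bullet of Theorem~\ref{thm:main_result} applied to $G = D_i$ gives $\sigma_1(D_i) \leq \norm{D_i^{(t)}}_F^{2^{1-t}}$ for every $i$ and every $t \geq 1$. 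Letting $i^*$ be an index achieving $\max_i \sigma_1(D_i)$,
\[
  \sigma_1(W) = \sigma_1(D_{i^*}) \leq \norm{D_{i^*}^{(t)}}_F^{2^{1-t}} \leq \max_{1 \leq i \leq n^2} \norm{D_i^{(t)}}_F^{2^{1-t}},
\]
which is the claimed inequality, valid at each iteration.

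Next I would prove convergence. The second bullet of Theorem~\ref{thm:main_result} gives, for each fixed $i$, that $\norm{D_i^{(t)}}_F^{2^{1-t}} \to \sigma_1(D_i)$. Since there are only finitely many blocks ($n^2$ of them), the finite maximum commutes with the limit, so $\max_i \norm{D_i^{(t)}}_F^{2^{1-t}} \to \max_i \sigma_1(D_i) = \sigma_1(W)$. Equivalently, one can sandwich: the inequality above gives $\sigma_1(W) \leq \max_i \norm{D_i^{(t)}}_F^{2^{1-t}}$ for all $t$, while taking the limit blockwise and using finiteness yields $\lim_t \max_i \norm{D_i^{(t)}}_F^{2^{1-t}} = \max_i \sigma_1(D_i) = \sigma_1(W)$, forcing the stated convergence.

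The only delicate point is the interchange of the finite maximum with the limit, which relies essentially on the number of blocks $n^2$ being finite. This is also where the convergence rate is controlled: the maximum of finitely many sequences, each converging Q-superlinearly of order $2-\epsilon$ by Theorem~\ref{thm:main_result}, is eventually governed by the slowest block, so the overall bound still converges superlinearly to $\sigma_1(W)$.
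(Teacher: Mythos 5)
Your proof is correct and follows essentially the same route as the paper: combine the identity $\sigma_1(W) = \max_i \sigma_1(D_i)$ from Equation~(\ref{equation:lip_conv_circu_expression}) with a blockwise application of Theorem~\ref{thm:main_result}, then pass to the limit using the finiteness of the number of blocks. The paper states this argument only in compressed form, so your write-up is simply a more explicit version of the intended proof.
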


Because $\norm{D_i^{(t)}}_F^{2^{1 -t}} = s_{2^{t}}(D_i)$ is a Schatten norm, it is convex and a maximum over norms establishes a convex function. Hence the upper bound is subdifferentiable everywhere.

\begin{algorithm}[h]
\caption{: Lip\_conv$(K, N_\text{iter})$}
\label{algo:gram_iteration_conv}
\begin{algorithmic}[1]
  \STATE \textbf{Inputs} Filter: $K$, number of iterations: $N_\text{iter}$
  \STATE $D \gets \texttt{FFT2}(K) $ \hfill \text{// FFT}
  \STATE $r \gets 0_{n^2}$ \hfill \text{// initialize rescaling}
  \STATE \textbf{for} $1 \ldots N_\text{iter}$
  \STATE \quad \textbf{for} \text{i} \textbf{in} $1 \ldots n^2$ \hfill \text{// for-loop in parallel}
    \STATE  \quad \quad $r_i \gets 2
    (r_i + \log\norm{D_i}_F)$ \hfill \text{// cumulate rescaling}
    \STATE \quad \quad $D_i \gets D_i/\norm{D_i}_F ~ $ \hfill \text{// rescale to avoid overflow}
    \STATE \quad \quad $D_i \gets D_i^* D_i$ \hfill \text{// Gram iteration}
  \STATE  $\sigma_1 \gets \max_i \{ \norm{D_i}_F^{2^{-N_\text{iter}}} \exp{(2^{-N_\text{iter}} r_i}) \}$
  \STATE \textbf{return} $\sigma_1$
\end{algorithmic}
\end{algorithm}

Using Proposition~\ref{prop:upper_bound_conv} we devise Algorithm~\ref{algo:gram_iteration_conv} to compute spectral norm for a convolutional layer defined by a convolutional filter $K$ padded towards input size.
This algorithm is deterministic and precise due to the properties of Gram iteration, as well as tractable and scalable for convolutions as we consider a batch of $n^2$ relatively small matrices of size $\cout \times \cin$. The batch computation on GPU is well optimized hence the speed of the algorithm.

It is possible to consider smaller $n_0 \leq n$ in Equation~(\ref{equation:bound_conv_gram}) to further reduce computation for large input spatial size and still compute a bound as explained in Appendix~\ref{section:approx_n0}. Approximation between circular and zero paddings is discussed in Appendix~\ref{section:approx_circ}.

\section{Numerical experiments}
\label{section:experiments}

For research reproducibility,
the code is available
\url{https://github.com/blaisedelattre/lip4conv}.
All experiences were done on one NVIDIA RTX A6000 GPU.


\subsection{Computation of spectral norms}
\label{subsection:expe_comput_spectral_norm}

\begin{figure}[t]
    \centering
\includegraphics[scale=0.55]{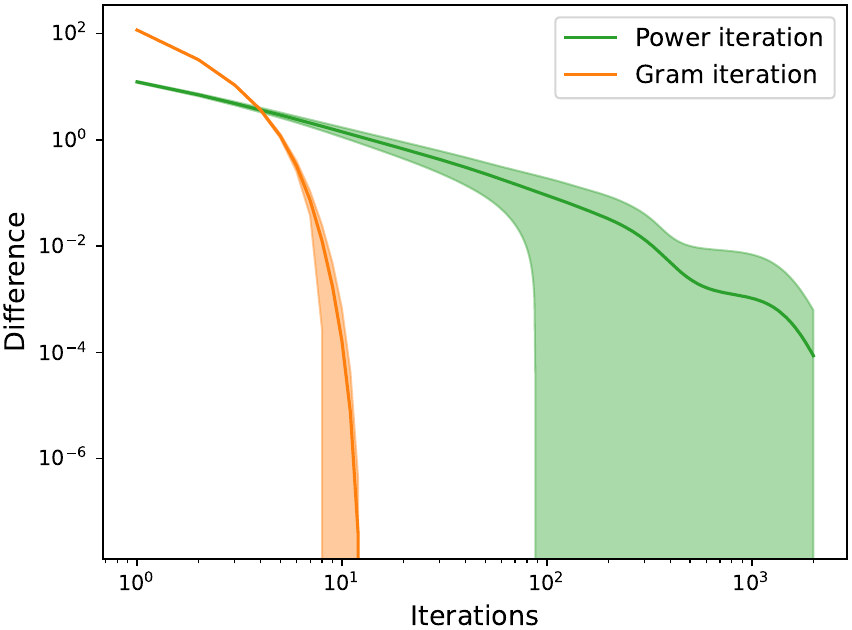}
    \caption{Convergence plot in log-log scale for spectral norm computation, comparing Power iteration and Gram iteration, one standard deviation shell is represented in a light color.
    Difference at each iteration is defined as $| {\sigma_1}_{\text{method}} - {\sigma_1}_{\text{ref}} |$.
    Gram iteration converges to numerical 0 in less than 12 iterations while Power iteration has not yet converged with 2,000  iterations and has a larger dispersion through runs.}
\label{fig:convergence_plot_gram_vs_power_itreration}
\end{figure}

We compare PI and GI methods for spectral norms computation, and a set of $100$ random Gaussian matrices $2000 \times 1000$ is generated.
The reference value for the spectral norm of a matrix ${\sigma_1}_{\text{ref}}$ is obtained from PyTorch using SVD, and Algorithm~\ref{algo:gram_iteration_dense} is used for GI. 
Estimation error is defined as $| {\sigma_1}_{\text{method}} - {\sigma_1}_{\text{ref}} |$.
We observe in Figure~\ref{fig:convergence_plot_gram_vs_power_itreration} that PI needs more than 2,000 iterations to fully converge to numerical 0 and at minimum $90$ iterations. Runs have a much larger standard deviation in comparison to GI. For that experiment, full convergence required up to 5,000 for some realizations, see Figure~\ref{fig:convergence_plot_gram_vs_power_itreration_5000_iters} in Appendix. 
This highlights how PI convergence can be very slow from one run to another for different generated matrices. 
Furthermore, in Figure~\ref{fig:convergence_plot_gram_vs_power_itreration_same_mat} in Appendix we show convergence of methods when a generated matrix is fixed: variance PI observed for PI is only due to its non-deterministic behavior, unlike GI which is deterministic.
This shows how the variance of PI is sensitive due to two causes: random generation of input matrices and the method in itself with random vector initialization at the start of the algorithm. On the contrary, GI converges under 15 iterations in every case. 

\begin{figure}[t]
    \centering
    \includegraphics[scale=0.55]{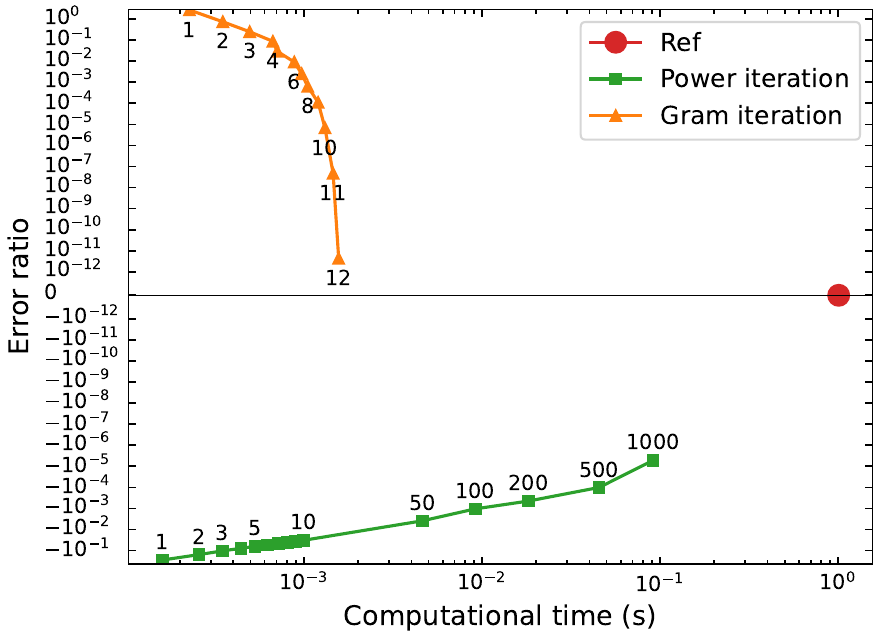}
    \caption{Error ratios over computational times of methods for spectral norm computation. Error ratio is defined as ${\sigma_1}_{\text{method}} / {\sigma_1}_{\text{ref}} - 1$. Points are annotated with the number of iterations. GI converges in $\pm 10^{-3}$s to numerical 0, while PI convergence is slower.}
\label{fig:comparaison_methods_compute_spectral_norm_matrix}
\end{figure}

To capture the sign of the error, \ie if the current estimator is an upper bound or not, we define the error ratio as ${\sigma_1}_{\text{method}} / {\sigma_1}_{\text{ref}} - 1$, depicted in
Figure~\ref{fig:comparaison_methods_compute_spectral_norm_matrix} also reported in Table~\ref{tab:comparaison_methods_compute_spectral_norm_matrix}.
We observe that the error ratio for PI is negative at each iteration, meaning that PI estimation approaches the reference while being inferior to it, thus being a lower bound.
GI empirically enjoys a very fast convergence, with $10$ iterations: doing one more iteration divide the error ratio by approximately $1e2$ and two more iterations by $1e4$. In contrast, PI has a much slower convergence: passing from $10$ to $100$ iterations only divide the ratio by a factor of $10$. 
This faster convergence rate from GI makes the method much faster in terms of computational time than PI and SVD for the same precision.
Those results motivate the use of GI for dense layers as well.


\begin{figure*}[!ht]
     \centering
     \begin{subfigure}[b]{0.32\textwidth}
         \centering
         \includegraphics[scale=0.4]{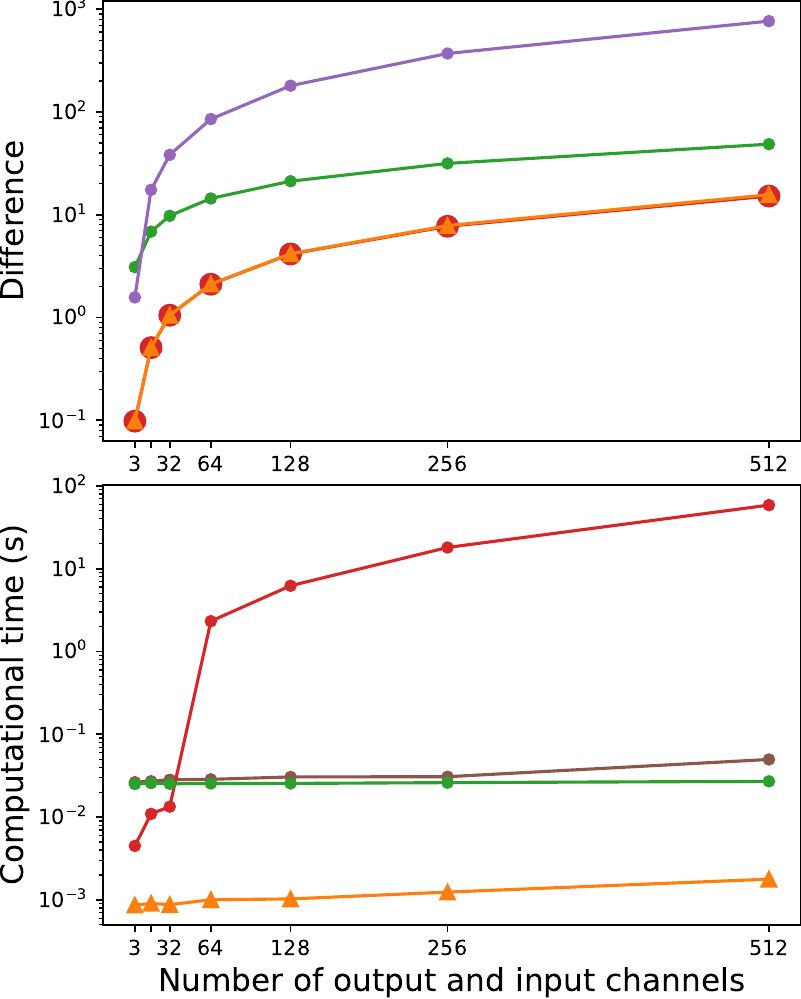}
         \caption{Varying number of channels $\cin$ and $\cout$, with $n=32$ and $k=3$.}
         \label{fig:times_estimation_error_vary_cin_cout}
     \end{subfigure}
     \hfill
     \begin{subfigure}[b]{0.32\textwidth}
         \centering
         \includegraphics[scale=0.4]{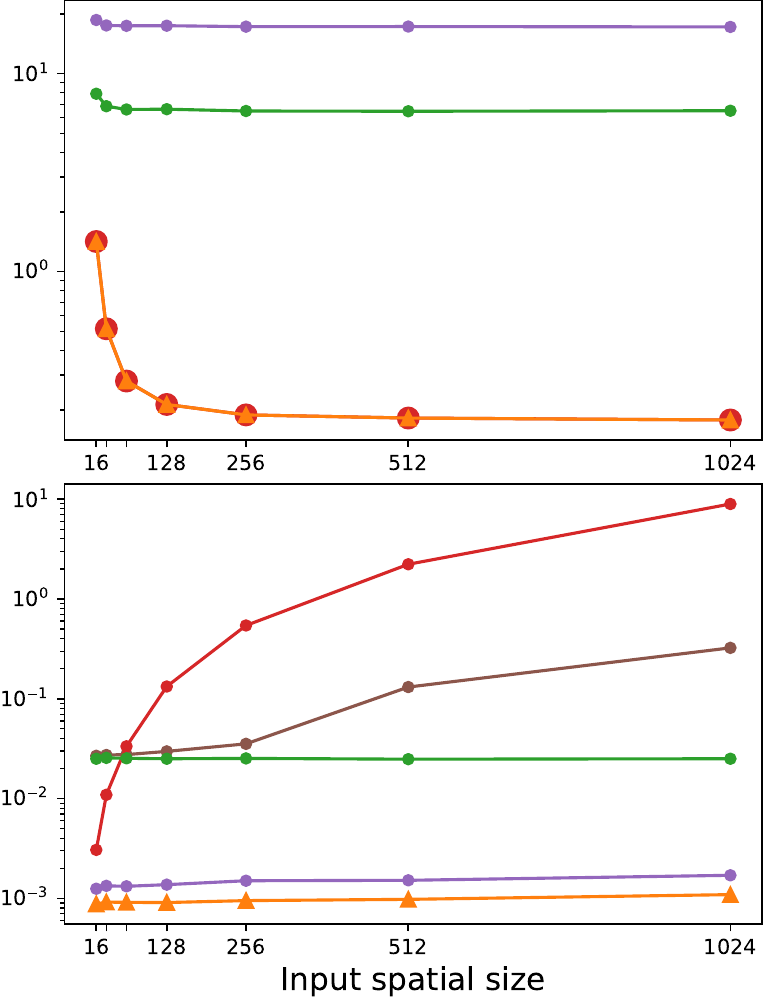}
         \caption{Varying input size $n$, with $\cin=16$, $\cin=16$ and $k=3$.}
         \label{fig:times_estimation_error_vary_input_size}
     \end{subfigure}
     \hfill
     \begin{subfigure}[b]{0.32\textwidth}
         \centering
         \includegraphics[scale=0.4]{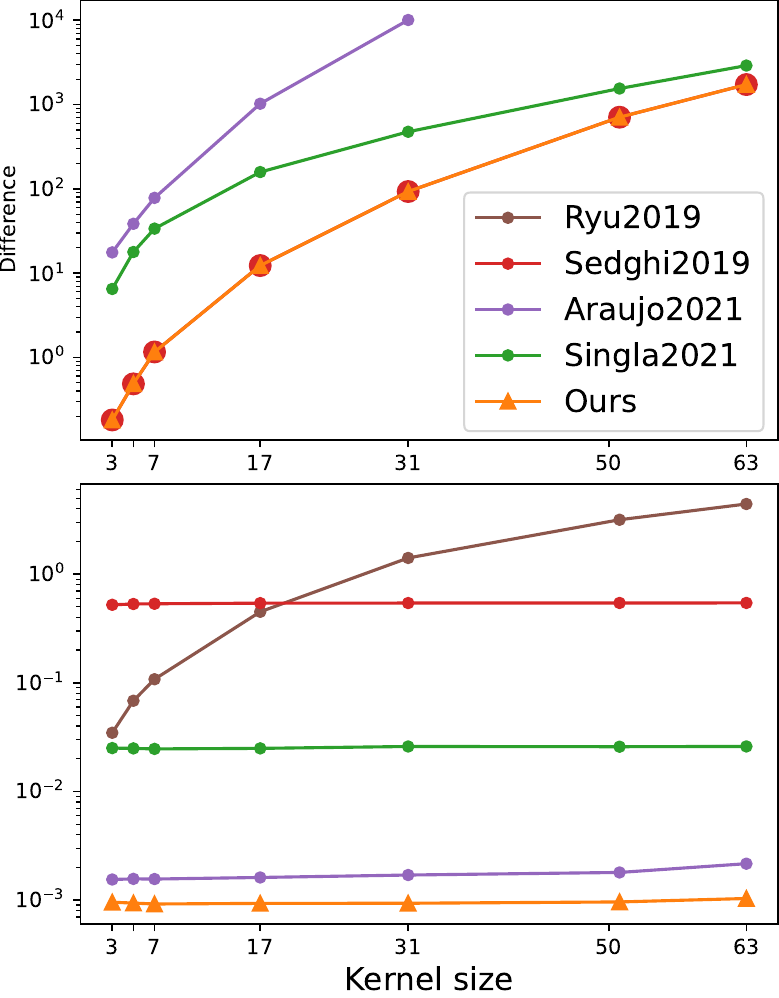}
         \caption{Varying kernel size $k$, with $\cin=16$, $\cin=16$ and $n=252$.}
         \label{fig:times_estimation_error_vary_kernel_size}
     \end{subfigure}
        \caption{For each subplot, we vary a convolution layer parameter and display the difference and computational times of Lipschitz bounds. Difference is expressed as  ${\sigma_1}_{\text{method}} - {\sigma_1}_{\text{Ryu2019}}$. \citet{sedghi2019singular} time cost overall is prohibitive for its use, regarding \citet{Ryu19Plug} it scales strongly with input and kernel size. 
        \citet{araujo2021lipschitz} and \citet{singla2021fantastic} are intermediate trade offs between precision and speed. 
        Our method performs best in terms of speed and  precision regarding reference \citet{Ryu19Plug} and gives the same values as \citet{sedghi2019singular} ones.
        }
        \label{fig:times_estimation_error_three_figures}
\end{figure*}

\subsection{Estimation on simulated convolutional layers}
\label{subsection:expe_estim_sim_conv_cout}

This experiment assesses the properties of bounds from different methods in terms of precision and computational time. We consider convolutional layers with constant padding and make vary three parameters: number of input and output channels (Figure~\ref{fig:times_estimation_error_vary_cin_cout}), spatial input size $n$ (Figure~\ref{fig:times_estimation_error_vary_input_size}) and  kernel size $k$ (Figure~\ref{fig:times_estimation_error_vary_kernel_size}).
For each experiment, random kernels are generated from Gaussian, and uniform distribution is repeated 50 times. We take average of errors, defined ${\sigma_1}_{\text{method}} - {\sigma_1}_{\text{Ryu2019}}$, and computational time (in seconds).
Method of \citet{Ryu19Plug} is taken as a reference with $100$ iterations to have precise estimation, \cite{singla2021fantastic} with $50$ iterations as in their code base, \cite{araujo2021lipschitz} with $50$ samples ($10$ taken in paper experiments description), and ours is Algorithm~\ref{algo:gram_iteration_conv} with $5$ iterations. We observe that our method gives a Lipschitz bound which matches the Lipschitz constant under the circular convolution hypothesis given by \cite{sedghi2019singular}, which gives the exact spectral norm for circular padding convolution. Furthermore, our method is the fastest of the considered methods.
We see that \cite{Ryu19Plug} computational time scales very fast with $n$ and $k$  whereas other methods having circular hypothesis remain quasi constant regarding computational times. We note that the Lipschitz difference increases as kernel size varies and the difference between circular and zero padding is more pronounced.
These experiments highlight the scalability of our method with respect to the parameters of convolutional layers.


\subsection{Estimation on convolutional layers of CNNs}

\begin{figure}[t]
    \centering
    \includegraphics[scale=0.6]{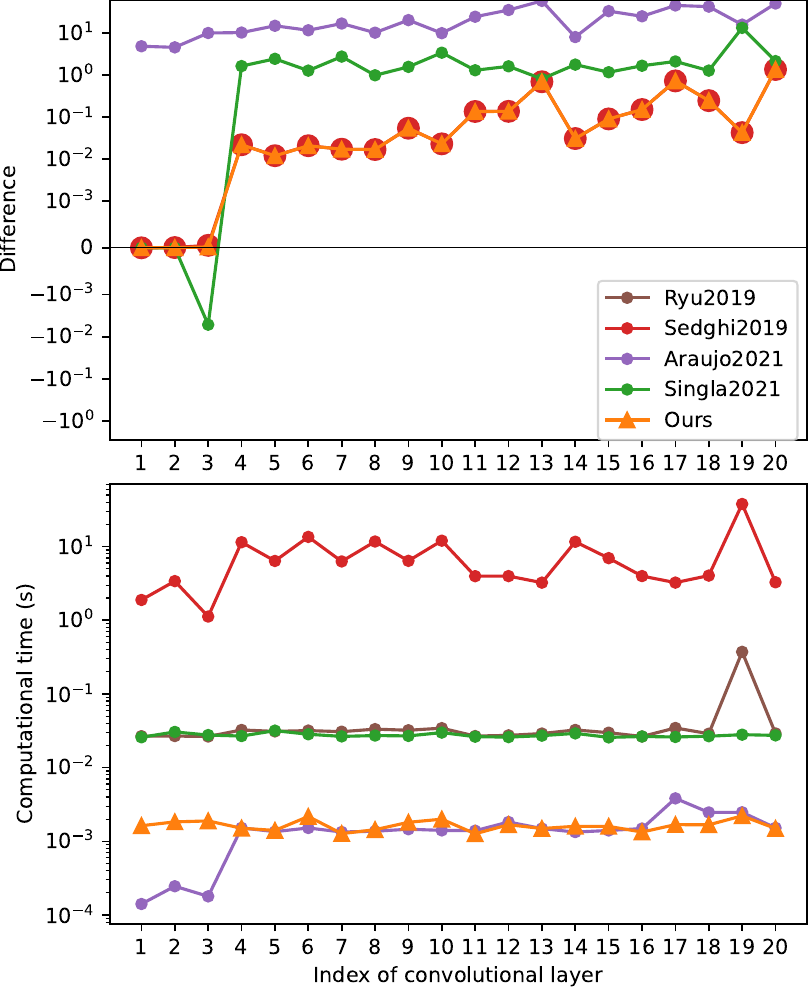}
    \caption{Comparison of Lipschitz bounds for all convolutional layers pre-trained ResNet18, reference is given by \citet{Ryu19Plug} method. Each convolutional layer in ResNet18 has a different number of output and input channels, kernel, and input spatial size. This experiment reproduces the usage of bounds on real kernel filters, we observe that our method gives the best precision of all and comparable time with \citet{araujo2021lipschitz}.}
    \label{fig:resnet18_convlip}
\end{figure}

Inspired by \cite{singla2021fantastic}, this experiment estimates the Lipschitz constant for each convolutional layer of a ResNet18 \cite{he2016deep}, pre-trained on the ImageNet-1k dataset. We take the same method's parameters as in Section~\ref{subsection:expe_estim_sim_conv_cout}.
Figure~\ref{fig:resnet18_convlip} reports the difference between reference and Lipschitz bounds for all convolutional layers of ResNet18 for different methods.
We observe that our method gives a bound very close to \cite{sedghi2019singular} and always above the reference value given by \cite{Ryu19Plug}. 
For the filter of index $3$, we see that the value given by \cite{singla2021fantastic}, $2.03$, is below the reference value $2.05$: this illustrated the issue with using PI in the pipeline to obtain an upper bound. Numerical results are detailed in Table~\ref{tab:lipschitz_resnet18}.

\begin{table*}[t]
\caption{Comparison of networks bound Lipschitz ratio with standard deviation for several CNNs, reference for computing the ratio is the bound given by \citet{Ryu19Plug} method. Overall Lipschitz constant bound $B_{\text{method}}$ is estimated for each method. Ratio of network Lipschitz bound is defined as $B_{\text{method}} / B_{\text{Ryu2019}}$. Results are averaged over 100 runs.
Ratio standard deviations of ours, \citet{araujo2021lipschitz} and \citet{sedghi2019singular} are induced by \citet{Ryu19Plug} method. We give the same ratio as \citet{sedghi2019singular} in a much lower time.
}
\vspace{0.1cm}
\centering
\sisetup{%
    table-align-uncertainty=true,
    separate-uncertainty=true,
    detect-weight=true,
    detect-inline-weight=math
  }
\vspace{0.1cm}
\resizebox{2.1\columnwidth}{!}{%
\begin{tabular}
    {
    l
    l
    l
    l
    l
    l
    }
\toprule
\multirow{2}[2]{*}{\textbf{Model}} &\multicolumn{4}{c}{\textbf{Ratio of Network Lipschitz Bound  (Total Running Time (s))}} \\
\cmidrule{2-6}
& \textbf{Ours} & \citeauthor{singla2021fantastic} &\citeauthor{araujo2021lipschitz} &\citeauthor{sedghi2019singular} & \citeauthor{Ryu19Plug}\\
\midrule
VGG16     & $1.14 \pm 0.020 ~(0.100)$ & $23.90 \pm 6.80 ~(0.47)$  & $4.88\mathrm{e}{+09}\pm 4.83\mathrm{e}{+09} ~(0.027) $ & $1.14 \pm 0.020 ~(525)$ & $(0.64)$ \\
VGG19     & $1.16 \pm 0.005 ~(0.110)$ & $30.63 \pm 0.30 ~(0.53)$  & $9.70\mathrm{e}{+09}\pm 4.84\mathrm{e}{+07} ~(0.030)$  & $1.16 \pm 0.005 ~(639)$ & $(0.71)$ \\
ResNet18  & $1.47 \pm 0.007 ~(0.039)$ & $87.93 \pm 0.88 ~(0.50)$  & $3.03\mathrm{e}{+10}\pm 1.56\mathrm{e}{+08} ~(0.039)$  & $1.47 \pm 0.007 ~(185)$ & $(0.71)$ \\
ResNet34  & $1.82 \pm 0.350 ~(0.060)$ & $4982 \pm 4894 ~(0.88)$   & $7.86\mathrm{e}{+21}\pm 7.86\mathrm{e}{+21} ~(0.050)$ & $2.15 \pm 0.350 ~(237)$ & $(1.16)$ \\
ResNet50  & $1.68 \pm 0.350 ~(0.100)$ & $3338 \pm 4622 ~(1.05)$   & $6.66\mathrm{e}{+31}\pm 9.42\mathrm{e}{+31} ~(0.050)$ & $1.67 \pm 0.350 ~(377)$ & $(1.49)$ \\
ResNet101 & $1.74 \pm 0.320 ~(0.173)$ & $4026 \pm 4178 ~(1.50)$   & $6.55\mathrm{e}{+64}\pm 1.13\mathrm{e}{+65} ~(0.100)$ & $1.74\pm 0.320 ~(551)$ & $(2.20)$ \\
ResNet152 & $1.92 \pm 0.460 ~(0.260)$ & $8.39\mathrm{e}{+4} \pm 1.60\mathrm{e}{+5} ~(2.05)$  &$2.85\mathrm{e}{+96} \pm 5.71\mathrm{e}{+96} ~(0.120)$ &$1.92 \pm 0.460 ~(725)$ & $(3.01)$ \\
\bottomrule
\end{tabular}
}
\label{tab:lipschitz_product_ratio_ryu}
\end{table*}

Contrary to the first part studying only convolutional layers, the overall Lipschitz constant of the network is assessed here. We consider several CNNs pre-trained on the ImageNet-1k dataset: VGG \cite{simonyan2014vgg} and ResNet \cite{he2016deep} of different sizes.
Using rules from Appendix~\ref{section:lip_cnn} to compute the Lipschitz constant of each individual layer in CNN, we compute a bound $B$ on the overall Lipschitz constant of the network.
Then we compare this produced bound for each method  to the one obtained by \cite{Ryu19Plug} by considering the ratio: $B_{\text{method}} / B_{\text{Ryu2019}}$.
We take $100$ iterations for \cite{Ryu19Plug} to have a precise reference, $50$ iterations for \cite{singla2021fantastic}, $20$ samples for \cite{araujo2021lipschitz} and $7$ iterations for our method, as the task requires increased precision. Results are reported in Table~\ref{tab:lipschitz_product_ratio_ryu}, we see that our method gives results similar to \cite{sedghi2019singular} and overall  network Lipschitz bound is tighter  than \cite{singla2021fantastic} and \cite{araujo2021lipschitz} in comparison to \cite{Ryu19Plug}. This experience illustrates that small errors in the estimation of a single Lipschitz constant layer can lead to major errors on the Lipschitz bound of the overall network and that acute precision is crucial. Moreover, methods using PI such as \cite{singla2021fantastic} have a huge standard deviation in this task which is problematic for deep networks whereas ours and \cite{sedghi2019singular} have standard deviations that remain small. Our method offers the same precision quality as \cite{sedghi2019singular} but significantly faster.


\subsection{Regularization of convolutional layers of ResNet}
\label{subsection:reg_conv_layers}

Regularization of spectral norms of convolutional layers has been studied in previous works \cite{yoshida2017spectral, miyato2018spectral, gouk2021regularisation}. The goal of this experiment is to assess how different bounds control the Lipschitz constant $L_j$ of each convolution layer $\phi_j$ along the training process. To measure this control, a target Lipschitz constant $L$ is set for all convolutions. The loss optimized during training becomes:
$\mathcal{L_{\text{train}}} + \mu_{\text{reg}} \sum_{j=1}^l \mathcal{L_{\text{reg}}}(\phi_j)$, with 
\begin{equation}
\mathcal{L_{\text{reg}}}(\phi_j) = {\sigma_1}_\text{method}(\phi_j) ~ \mathbbm{1}_{{\sigma_1}_\text{method}(\phi_j) > L} \ .
\end{equation}
This regularization loss penalizes the Lipschitz constant of the convolutional layer only if it is above the target Lipschitz constant $L$, else regularization is deactivated. Function $x \mapsto \mathbbm{1}_{x > L}$ is not differentiable in $L$, but the probability for $\sigma_1(\phi_j)$ to be numerically equal to $L$ is close to 0.

We use ResNet18 architecture \cite{he2016deep}, trained on the CIFAR-10 dataset for 200 epochs, and with a batch size of 256. We use SGD with a momentum of $0.9$ and an initial learning rate of $0.1$ with a cosine annealing schedule. 
The baseline is trained without regularizations.
Different regularizations are compared: Lipschitz regularization by different methods with $\mu_{\text{reg}} = 1e\text{-}1$; and the usual weight decay (WD) applied on convolutional filters with $\mu_{\text{reg}} = 5e\text{-}3$, which has been picked to bound layers spectral norm around the target as good as possible.
We only compare methods under the circular padding hypothesis and use \cite{sedghi2019singular} as the reference since it provides an exact value. We take $6$ iterations for Gram iteration in our method, $10$ for power iteration in \cite{singla2021fantastic}, and $10$ samples in \cite{araujo2021lipschitz} to have similar training times. For our method, we implement an explicit backward differentiation using Equation~(\ref{eq:gram_iteration_bound_gradient}) to speed up the gradient computation and reduce the memory footprint.

Figure~\ref{fig:histograms_convolution_resnet_training_for_bounds} presents results on Lipschitz constant control of convolutional layers for ResNet18. The goal here is to have a Dirac distribution at target value $L=1$, \ie we want each convolutional layer to have a spectral norm of $1$. We observe that the more precise a method is, the more well-controlled the resulting spectrum resulting from training is. Our regularization method matches each layer's target spectral norm of $1$.
Weight decay gives gross estimations and \cite{araujo2021lipschitz} overestimates spectral norms and thus results in over-constrained Lipschitz constant for some layers. \cite{singla2021fantastic} produces a maximum spectral of $1.1$ norm which is above target $L$ but overall Lipschitz constant is centered around the target with dispersion. This suggests that loosely bound under-constraints convolution's spectral norm and that the differentiation of our bound behaves correctly.
Detailed spectral norm histograms over epochs for each Lipschitz regularization method are presented in Figure~\ref{fig:spectral_norm_over_epochs_reg_resnet_training_for_bounds}, showing that all spectral norms are below the target Lipschitz constant at epoch $120$ for our method and accounts for fast convergence in Lipschitz regularization in comparison to other methods.


\begin{figure}[h]
    \centering
    \includegraphics[scale=0.43]{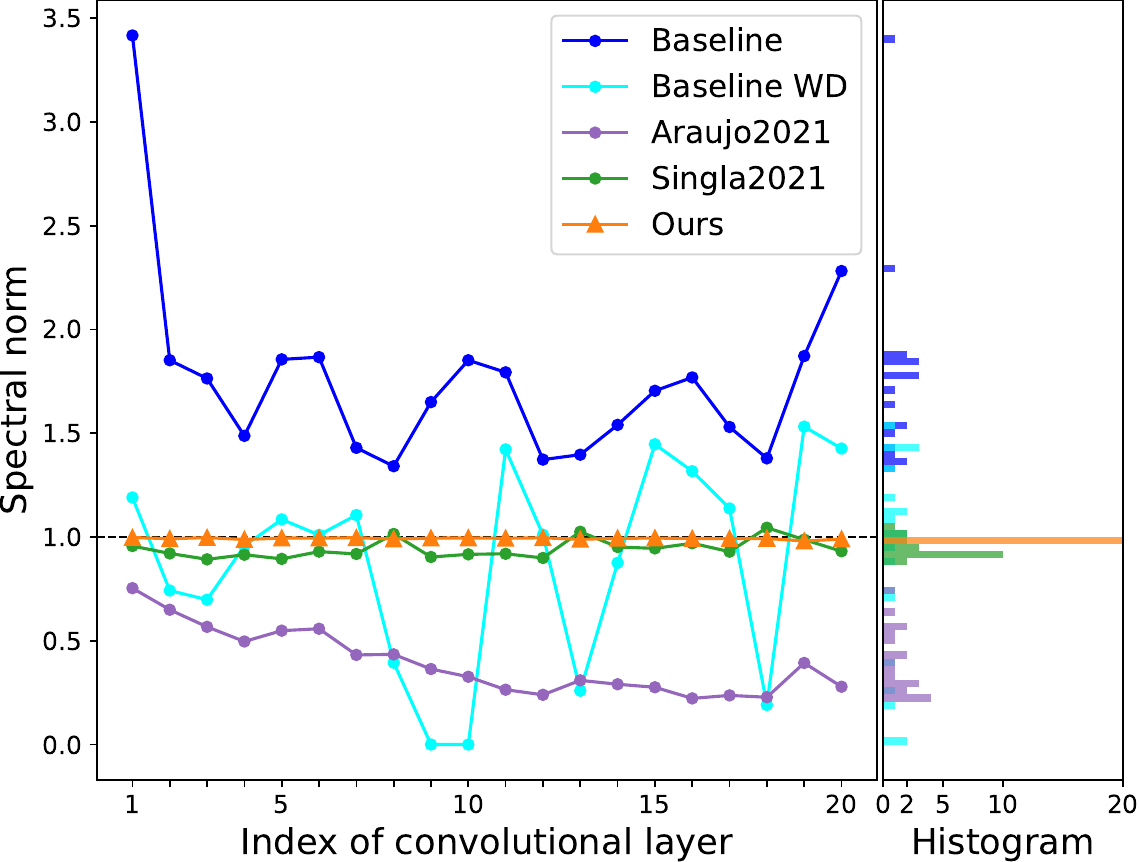}
    \caption{Plot and histogram of spectral norms for each convolution layer in ResNet18 at the end of training on CIFAR-10, for different regularization methods. We observe that the histogram of spectral norms regularized with our method is all at the target Lipschitz constant, meanwhile, other methods' histograms are scattered. }
    \label{fig:histograms_convolution_resnet_training_for_bounds}
\end{figure}


\subsection{Image classification with regularized ResNet}

Finally, to study the impact of regularization on image classification accuracy and training time, we conduct experiments using the same experimental training framework as previous Section~\ref{subsection:reg_conv_layers} for CIFAR-10. That is to say with a high learning rate and regularization parameter to discriminate the different bounds. Trainings are repeated 10 times. Accuracy of \cite{sedghi2019singular} was omitted because the training was too long to finish, around $6750$s per epoch. We vary the number of iterations for our bound, from $3$ to $6$ at the end of the training, because we observe reduced standard deviation by doing so. Our training time is reported for $6$ Gram iterations.
Results of experiments are reported in Table~\ref{tab:test_accuracies_method_reg_cifar10}. Most of the bounds for Lipschitz regularization \cite{Ryu19Plug, araujo2021lipschitz, singla2021fantastic} tend to slightly degrade accuracy in comparison to baseline, contrary to our bound where a small accuracy gain can be observed: our Lipschitz regularization is not a trade-off on accuracy under this training setting. Furthermore, the low standard deviation of our accuracies suggests that our regularization stabilizes training contrary to all other approaches. The computational cost of our method is the lightest of all Lipschitz regularization approaches.

\begin{table}[h]
\caption{This table shows test accuracies and training times for ResNet18 on CIFAR-10, repeated 10 times. Our method has a much narrow standard deviation, slightly better accuracy, and lower training time in comparison to other bounds for Lipschitz regularization.}
\centering
\footnotesize
\begin{tabular}{lllllll}\toprule
 &\textbf{Test accuracy} (\%) &  \textbf{Time per epoch (s)}  \\\cmidrule{1-3}
{Baseline} &93.32 $\pm$ 0.12 &11.4 \\
{Baseline WD} &93.30 $\pm$ 0.19 &11.4 \\
{\citeauthor{Ryu19Plug}} & {93.27} $\pm$ 0.13 & 95.0 \\
{\citeauthor{sedghi2019singular}} & {\ \ \ \ \ \ \ \ \ \ \xmark} & 6750 \\
{\citeauthor{araujo2021lipschitz}} &90.66 $\pm$ 0.26 &37.4 \\
{\citeauthor{singla2021fantastic}} &93.29 $\pm$ 0.15 &38.4 
\\\midrule
Ours &93.48 $\pm$ 0.08 & 31.9 \\\midrule
\end{tabular}
\label{tab:test_accuracies_method_reg_cifar10}
\end{table}

To demonstrate the scalability of our method, a ResNet18 is trained on the ImageNet-1k dataset, processing $256 \times 256$ images.
ResNet18 baseline (with WD) is compared with two regularized ResNet18: one trained from scratch on 88 epochs, and one initialized on a pre-trained baseline and fined-tuned on 10 epochs. Trainings are repeated four times on 4 GPU V100.
The results of experiments are reported in 
Table~\ref{tab:test_accuracies_method_reg_imagenet}.
This experiment shows that our bound remains efficient when dealing with large images. Moreover, it is not required to train a new ResNet from scratch: any trained ResNet can be fine-tuned on a few epochs to be regularized.

\begin{table}[h]
\caption{This table shows test accuracies and training times for ResNet18 on ImageNet-1k, repeated 4 times.}
\centering
\footnotesize
\begin{tabular}{lllllll}\toprule
 & \textbf{Test accuracy} (\%) & \textbf{Time per epoch (s)} \\\cmidrule{1-3}
{Baseline WD} &69.76 & 746 \\\midrule
Ours &70.77 $\pm$ 0.12 & 782 \\
Ours (fine-tuned) &70.50 $\pm$ 0.05 & 782 \\\midrule
\end{tabular}
\label{tab:test_accuracies_method_reg_imagenet}
\end{table}







%
%
%
%

\section{Conclusion}




In this paper, we introduce an efficient upper bound on the spectral norm of circular convolutional layers. It uses Gram iteration, a new and promising alternative to the power method, exhibiting quasi-quadratic convergence.
Our comprehensive set of experiments shows that our method is accurate (with low variance), fast and tractable for large input sizes and convolutional filters. Our result provides upper bound guarantees on the spectral norm. Furthermore, it allows the Lipschitz constant of convolutional layers to be precisely controlled by differentiation. It is also suitable for Lipschitz regularization during the training process of deep neural networks, bringing stability along with better classification accuracy.

Instead of focusing on the largest singular values, some works orthogonalize the weights \cite{li2019preventing, trockman2021orthogonalizing}, constraining all singular values to be equal to 1. We believe that our approach provides better expressiveness than orthogonalizing layers, as the spectrum can retain more information. Further work will consist of a theoretical study of the differences between the two approaches.

\subsubsection*{Acknowledgments}
This work was performed using HPC resources from GENCI- IDRIS (Grant 2023-AD011014214) and funded by the French National Research Agency (ANR SPEED-20-CE23-0025).

\newpage
\bibliography{bibliography}

\begin{thebibliography}{37}
\providecommand{\natexlab}[1]{#1}
\providecommand{\url}[1]{\texttt{#1}}
\expandafter\ifx\csname urlstyle\endcsname\relax
  \providecommand{\doi}[1]{doi: #1}\else
  \providecommand{\doi}{doi: \begingroup \urlstyle{rm}\Url}\fi

\bibitem[Anil et~al.(2019)Anil, Lucas, and Grosse]{anil2019sorting}
Anil, C., Lucas, J., and Grosse, R.
\newblock Sorting out lipschitz function approximation.
\newblock In \emph{International Conference on Machine Learning}, 2019.

\bibitem[Araujo et~al.(2021)Araujo, Negrevergne, Chevaleyre, and
  Atif]{araujo2021lipschitz}
Araujo, A., Negrevergne, B., Chevaleyre, Y., and Atif, J.
\newblock On lipschitz regularization of convolutional layers using toeplitz
  matrix theory.
\newblock \emph{AAAI Conference on Artificial Intelligence}, 2021.

\bibitem[Arnoldi(1951)]{arnoldi1951principle}
Arnoldi, W.~E.
\newblock The principle of minimized iterations in the solution of the matrix
  eigenvalue problem.
\newblock \emph{Quarterly of applied mathematics}, 9\penalty0 (1):\penalty0
  17--29, 1951.

\bibitem[Bartlett et~al.(2017)Bartlett, Foster, and
  Telgarsky]{bartlett2017spectrally}
Bartlett, P.~L., Foster, D.~J., and Telgarsky, M.~J.
\newblock Spectrally-normalized margin bounds for neural networks.
\newblock In \emph{Advances in Neural Information Processing Systems}, 2017.

\bibitem[Bhatia(2013)]{bhatia2013matrix}
Bhatia, R.
\newblock \emph{Matrix analysis}, volume 169.
\newblock Springer Science \& Business Media, 2013.

\bibitem[Bibi et~al.(2019)Bibi, Ghanem, Koltun, and Ranftl]{bibi2019deep}
Bibi, A., Ghanem, B., Koltun, V., and Ranftl, R.
\newblock Deep layers as stochastic solvers.
\newblock In \emph{International Conference on Learning Representations}, 2019.

\bibitem[Cisse et~al.(2017)Cisse, Bojanowski, Grave, Dauphin, and
  Usunier]{cisse2017parseval}
Cisse, M., Bojanowski, P., Grave, E., Dauphin, Y., and Usunier, N.
\newblock Parseval networks: Improving robustness to adversarial examples.
\newblock In \emph{International Conference on Machine Learning}, 2017.

\bibitem[Farnia et~al.(2019)Farnia, Zhang, and Tse]{farnia2019generalizable}
Farnia, F., Zhang, J., and Tse, D.
\newblock Generalizable adversarial training via spectral normalization.
\newblock In \emph{International Conference on Learning Representations}, 2019.

\bibitem[Fazlyab et~al.(2019)Fazlyab, Robey, Hassani, Morari, and
  Pappas]{fazlyab2019efficient}
Fazlyab, M., Robey, A., Hassani, H., Morari, M., and Pappas, G.
\newblock Efficient and accurate estimation of lipschitz constants for deep
  neural networks.
\newblock In \emph{Advances in Neural Information Processing Systems}, 2019.

\bibitem[Golub et~al.(2000)]{golub2000eigenvalue}
Golub, G.~H. et~al.
\newblock Eigenvalue computation in the 20th century.
\newblock \emph{Journal of Computational and Applied Mathematics}, 2000.

\bibitem[Gouk et~al.(2021)Gouk, Frank, Pfahringer, and
  Cree]{gouk2021regularisation}
Gouk, H., Frank, E., Pfahringer, B., and Cree, M.~J.
\newblock Regularisation of neural networks by enforcing lipschitz continuity.
\newblock \emph{Machine Learning}, 2021.

\bibitem[He et~al.(2016)He, Zhang, Ren, and Sun]{he2016deep}
He, K., Zhang, X., Ren, S., and Sun, J.
\newblock Deep residual learning for image recognition.
\newblock In \emph{IEEE Conference on Computer Vision and Pattern Recognition},
  2016.

\bibitem[Henderson \& Searle(1981)Henderson and Searle]{henderson1981TheVM}
Henderson, H.~V. and Searle, S.~R.
\newblock The vec-permutation matrix, the vec operator and kronecker products:
  A review.
\newblock \emph{Linear \& Multilinear Algebra}, 9:\penalty0 271--288, 1981.

\bibitem[Huang et~al.(2020)Huang, Liu, Zhu, Wan, Yuan, Li, and
  Shao]{huang2020controllable}
Huang, L., Liu, L., Zhu, F., Wan, D., Yuan, Z., Li, B., and Shao, L.
\newblock Controllable orthogonalization in training dnns.
\newblock In \emph{IEEE Conference on Computer Vision and Pattern Recognition},
  2020.

\bibitem[Jain(1989)]{jain1989fundamentals}
Jain, A.~K.
\newblock Fundamentals of digital image processing.
\newblock In \emph{Fundamentals of digital image processing}. Englewood Cliffs,
  NJ: Prentice Hall, 1989.

\bibitem[Jordan \& Dimakis(2020)Jordan and Dimakis]{jordan_exactly_2020}
Jordan, M. and Dimakis, A.~G.
\newblock Exactly {Computing} the {Local} {Lipschitz} {Constant} of {ReLU}
  {Networks}.
\newblock In \emph{Advances in {Neural} {Information} {Processing} {Systems}},
  2020.

\bibitem[Krizhevsky et~al.(2012)Krizhevsky, Sutskever, and
  Hinton]{NIPS2012_c399862d}
Krizhevsky, A., Sutskever, I., and Hinton, G.~E.
\newblock Imagenet classification with deep convolutional neural networks.
\newblock In \emph{Advances in Neural Information Processing Systems}, 2012.

\bibitem[Kublanovskaya(1962)]{KUBLANOVSKAYA1962637QRmethod}
Kublanovskaya, V.
\newblock On some algorithms for the solution of the complete eigenvalue
  problem.
\newblock \emph{USSR Computational Mathematics and Mathematical Physics},
  1\penalty0 (3):\penalty0 637--657, 1962.

\bibitem[Lanczos(1950)]{lanczos1950iteration}
Lanczos, C.
\newblock An iteration method for the solution of the eigenvalue problem of
  linear differential and integral operators.
\newblock \emph{Journal of Research of the National Bureau of Standards}, 1950.

\bibitem[Latorre et~al.(2020)Latorre, Rolland, and
  Cevher]{latorre2020lipschitz}
Latorre, F., Rolland, P., and Cevher, V.
\newblock Lipschitz constant estimation of neural networks via sparse
  polynomial optimization.
\newblock In \emph{International Conference on Learning Representations}, 2020.

\bibitem[Lehoucq et~al.(1996)]{lehoucq1996deflation}
Lehoucq, R.~B. et~al.
\newblock Deflation techniques for an implicitly restarted {Arnoldi} iteration.
\newblock \emph{SIAM Journal on Matrix Analysis and Applications}, 17\penalty0
  (4):\penalty0 789--821, 1996.

\bibitem[Li et~al.(2019)Li, Haque, Anil, Lucas, Grosse, and
  Jacobsen]{li2019preventing}
Li, Q., Haque, S., Anil, C., Lucas, J., Grosse, R.~B., and Jacobsen, J.-H.
\newblock Preventing gradient attenuation in lipschitz constrained
  convolutional networks.
\newblock In \emph{Advances in Neural Information Processing Systems}. 2019.

\bibitem[Miyato et~al.(2018)Miyato, Kataoka, Koyama, and
  Yoshida]{miyato2018spectral}
Miyato, T., Kataoka, T., Koyama, M., and Yoshida, Y.
\newblock Spectral normalization for generative adversarial networks.
\newblock In \emph{International Conference on Learning Representations}, 2018.

\bibitem[Pfister \& Bresler(2019)Pfister and Bresler]{pfister2019bounding}
Pfister, L. and Bresler, Y.
\newblock Bounding multivariate trigonometric polynomials.
\newblock \emph{IEEE Transactions on Signal Processing}, 67\penalty0
  (3):\penalty0 700--707, 2019.

\bibitem[Ryu et~al.(2019)Ryu, Liu, Wang, Chen, Wang, and Yin]{Ryu19Plug}
Ryu, E.~K., Liu, J., Wang, S., Chen, X., Wang, Z., and Yin, W.
\newblock Plug-and-play methods provably converge with properly trained
  denoisers.
\newblock In \emph{International Conference on Machine Learning}, 2019.

\bibitem[Sedghi et~al.(2019)Sedghi, Gupta, and Long]{sedghi2019singular}
Sedghi, H., Gupta, V., and Long, P.
\newblock The singular values of convolutional layers.
\newblock In \emph{International Conference on Learning Representations}, 2019.

\bibitem[Shi et~al.(2022)Shi, Wang, Zhang, Kolter, and
  Hsieh]{shi_efficiently_2022}
Shi, Z., Wang, Y., Zhang, H., Kolter, J.~Z., and Hsieh, C.-J.
\newblock Efficiently {Computing} {Local} {Lipschitz} {Constants} of {Neural}
  {Networks} via {Bound} {Propagation}.
\newblock \emph{Advances in Neural Information Processing Systems}, 2022.

\bibitem[Simonyan \& Zisserman(2014)Simonyan and Zisserman]{simonyan2014vgg}
Simonyan, K. and Zisserman, A.
\newblock {Very deep convolutional networks for large-scale image recognition}.
\newblock \emph{arXiv preprint arXiv:1409.1556}, 2014.

\bibitem[Singla et~al.(2021)]{singla2021fantastic}
Singla, S. et~al.
\newblock Fantastic four: Differentiable and efficient bounds on singular
  values of convolution layers.
\newblock In \emph{International Conference on Learning Representations}, 2021.

\bibitem[Trockman et~al.(2021)]{trockman2021orthogonalizing}
Trockman, A. et~al.
\newblock Orthogonalizing convolutional layers with the cayley transform.
\newblock In \emph{International Conference on Learning Representations}, 2021.

\bibitem[Tsuzuku et~al.(2018)Tsuzuku, Sato, and Sugiyama]{tsuzuku2018lipschitz}
Tsuzuku, Y., Sato, I., and Sugiyama, M.
\newblock Lipschitz-margin training: Scalable certification of perturbation
  invariance for deep neural networks.
\newblock In \emph{Advances in Neural Information Processing Systems}, 2018.

\bibitem[Virmaux \& Scaman(2018)Virmaux and Scaman]{scaman2018lipschitz}
Virmaux, A. and Scaman, K.
\newblock Lipschitz regularity of deep neural networks: analysis and efficient
  estimation.
\newblock In \emph{Advances in Neural Information Processing Systems}, 2018.

\bibitem[Wang et~al.(2020)Wang, Chen, Chakraborty, and Yu]{wang2020orthogonal}
Wang, J., Chen, Y., Chakraborty, R., and Yu, S.~X.
\newblock Orthogonal convolutional neural networks.
\newblock In \emph{IEEE Conference on Computer Vision and Pattern Recognition},
  2020.

\bibitem[Yi(2021)]{yi2020asymptotic}
Yi, X.
\newblock Asymptotic singular value distribution of linear convolutional
  layers.
\newblock \emph{arXiv preprint arXiv:2006.07117}, 2021.

\bibitem[Yoshida \& Miyato(2017)Yoshida and Miyato]{yoshida2017spectral}
Yoshida, Y. and Miyato, T.
\newblock Spectral norm regularization for improving the generalizability of
  deep learning.
\newblock \emph{arXiv preprint arXiv:1705.10941}, 2017.

\bibitem[Zhang et~al.(2022)Zhang, Jiang, He, and Wang]{zhang_rethinking_2022}
Zhang, B., Jiang, D., He, D., and Wang, L.
\newblock Rethinking {Lipschitz} {Neural} {Networks} and {Certified}
  {Robustness}: {A} {Boolean} {Function} {Perspective}.
\newblock In \emph{Advances in Neural Information Processing Systems}, 2022.

\bibitem[Zhang et~al.(2019)Zhang, Zhang, and Hsieh]{zhang_recurjac_2019}
Zhang, H., Zhang, P., and Hsieh, C.-J.
\newblock {RecurJac}: {An} {Efficient} {Recursive} {Algorithm} for {Bounding}
  {Jacobian} {Matrix} of {Neural} {Networks} and {Its} {Applications}.
\newblock \emph{AAAI Conference on Artificial Intelligence}, 2019.

\end{thebibliography}
\bibliographystyle{icml2023}

\clearpage
\appendix
\onecolumn


\section{Iterative methods to estimate spectral norm}
\label{section:spectral_norm_methods}

Input $G$ is a rectangular complex matrix, $G \in \mathbb{C}^{p \times q}$.
Concerning the termination criterion, $N_\text{iter}$ is the maximal number of iterations, but it could be combined with a threshold on the update norm.


\subsection{Power iteration}

Power iteration is described in Algo.~\ref{algo:power_iteration}: it is a simple but non-deterministic method that may converge slowly depending on input $G$ and on initialization of random vector $u \in \mathbb{C}^{q}$.




\subsection{Gram iteration}

A naive implementation of Gram iteration is given in Algo.~\ref{algo:gram_iteration}: this simple method is deterministic.
If each Gram iteration is more complex than the power one, the matrix-matrix product is highly optimized on GPU, providing a fast method.
If we compare PI and GI, we see that in the latter we iterate on the operator represented by matrix $G$ and modify it, whereas in PI operator remains the same.

\begin{algorithm}[h]
\caption{: Gram\_iteration\_naive$(G, N_\text{iter})$}
\label{algo:gram_iteration}
\begin{algorithmic}[1]
  \STATE \textbf{Inputs} matrix: $G$, number of iterations: $N_\text{iter}$
  \STATE \textbf{for} $1 \ldots N_\text{iter}$
    \STATE \quad $G \gets G^* G$
  \STATE $\sigma_1 \gets \norm{G}_F^{2^{-N_\text{iter}}}$
  \STATE \textbf{return} $\sigma_1$
\end{algorithmic}
\end{algorithm}


\subsection{Gram iteration with Rescaling}
\label{section:gram_iteration_rescaling}

Due to the fast increase of values, we observe overflow in Algo.~\ref{algo:gram_iteration}. To avoid it, we introduce a rescaling by Frobenius norm at each iteration: $G \gets G/\norm{G}_F$. 

Denoting $G^{(t)}$ the Gram iterate at iteration $t$, the result of Gram iteration with rescaling is:
\begin{equation*}
    \norm{G^{(N_{iter})}}_F^{2^{-N_\text{iter}}} 
    = \norm{(G^* G)^{{2^{N_\text{iter}-1}}}}_F^{2^{-N_\text{iter}}} \ / \ \prod_{t=1}^{N_{iter}} \norm{G^{(t)}}_F^{2^{-t+1}}
    = s_{2^{N_\text{iter}}}(G) \ / \ \prod_{t=1}^{N_{iter}} \norm{G^{(t)}}_F^{2^{-t+1}} \ ,
\end{equation*}
which is no longer equal to the expected $2^{N_\text{iter}}$-Schatten norm of $G$.

In order to unscale the result at the end of the method, scaling factors are cumulated into a variable $r$. Denoting $r^{(t)}$ the cumulated scaling factor at iteration $t$, we have for the last iteration: 
\begin{equation*}
 r^{(N_{iter})} = \sum_{t=1}^{N_{iter}} 2^{N_{iter}-t+1} \log \norm{G^{(t)}}_F \ .
\end{equation*}
At the end of the method, we must unscale by: 
\begin{equation*}
 \exp{ \left(2^{-N_\text{iter}} ~r^{(N_{iter})}\right)} = \prod_{t=1}^{N_{iter}} \norm{G^{(t)}}_F^{2^{-t+1}} \ .
\end{equation*}
Finally, the result of Gram iteration with rescaling at each iteration and final unscaling is the right $2^{N_\text{iter}}$-Schatten norm of $G$:
\begin{equation*}
    \norm{G^{(N_{iter})}}_F^{2^{-N_\text{iter}}} \exp{\left(2^{-N_\text{iter}} ~r^{(N_{iter})}\right)} 
    = \norm{(G^* G)^{2^{N_\text{iter}-1}}}_F^{2^{-N_\text{iter}}} 
    = s_{2^{N_\text{iter}}}(G) \ .
\end{equation*}
Unscaling is crucial as it is required to obtain a Schatten norm, which remains a strict upper bound on the spectral norm at each iteration of the method.

Gram iteration with rescaling is described in Algo.~\ref{algo:gram_iteration_dense} and can be applied on a dense layer as it is.




\subsection{Largest singular vectors by Gram iteration}

At the end of the Gram iteration, the largest singular vectors can be estimated.
The largest right-singular vector $u \in \mathbb{C}^{q}$ can be estimated using the final matrix $G \in \mathbb{C}^{q \times q}$, as it is a projector on the space of maximal singular value. So any non-zero column of index $i$, $G_{:, i}$ will be multiples of $u$: 
\begin{equation}
u \leftarrow G_{:, i} / \norm{G_{:, i}}_2 \ .
\end{equation}

Then, defining $G_0$ the initial matrix $G$, the largest left-singular vector $v \in \mathbb{C}^{p}$ is:
\begin{equation}
v \leftarrow G_0 u / \norm{G_0 u}_2 \ .
\end{equation}

Note that using deflation techniques with GI, it is possible to estimate all other singular values and associated singular vectors, as done classically with PI.


\subsection{Largest eigenpair by Gram iteration}

Now, input $G$ is a square complex matrix, $G \in \mathbb{C}^{p \times p}$.
We want to compute its largest eigenpair $(\lambda_1, u_1)$,
where $\lambda_1$ is the largest eigenvalue and $u_1$ the largest eigenvector.

Power iteration described in Algorithm~\ref{algo:power_iteration} can be simply adapted to output this largest eigenpair.
Steps 4 and 5 are replaced by: $u \gets Gu / \norm{Gu}_2$, and step 6 is replaced by: $\lambda_1 \gets u^*Gu / \norm{u}_2$. At the end of PI, $u$ is $u_1$.

Similarly, it is also possible to adapt the Gram iteration described in Algorithm~\ref{algo:gram_iteration_dense}.
Step 5 is replaced by: $G \gets GG$. 
Given the eigenvalue decomposition of $G = Q \Lambda Q^{-1}$, the $t^{th}$ iterate $G^{(t)} = G^{2^{t-1}} = Q \Lambda^{2^{t-1}} Q^{-1}$. The same results applied for this alternative iteration as for Gram iteration, indeed proof of Section~\ref{sec:proofs} still holds.

Since Gram matrix $G^{(2)} = {G^{(1)}}^{*}G^{(1)}$ is Hermitian, \ie ${G^{(2)}}^{*}=G^{(2)}$, both steps 5 are equivalent from the second iteration: ${G^{(t)}}^{*}G^{(t)} = G^{(t)}G^{(t)}$, $\forall t \geq 2$.



\section{Proofs of Section~\ref{ssec:spectral_norm_conv}}
\label{sec:proofs}

Singular value decomposition of $G \in \mathbb{C}^{p \times q}$ is written $V_1 \Sigma V_2^*$, where $\Sigma = \diag(\sigma)$, $\sigma = (\sigma_i)_{1 \leq i \leq m}$ and $m = \min(p, q)$. We use the decreasing order indexation convention: $\sigma_1 \geq \sigma_2 \geq \dots \geq \sigma_m$.
\\
We have: $\sigma_1(G) = \sigma_1 = \norm{\sigma}_\infty$.
\\
We get: $G^{(1)}= G$, and $\forall t \geq 2$: $G^{(t)} = V_2 \Sigma^{2^{t-1}} V_2^*$.


\subsection{Proof of Theorem~\ref{thm:main_result}}
\label{proof:bound_sqrt_frob_gt}

\begin{prop*}
An upper bound of the spectral norm is:
\begin{equation*}
\sigma_1(G) \leq {\norm{G^{(t)}}_F}^{2^{1 -t}} \ \ .
\end{equation*}
\end{prop*}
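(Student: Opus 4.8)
The plan is to reduce everything to the singular values of $G$ and then invoke an elementary domination between $\ell_p$-norms. First I would recall the setup already established at the start of this appendix: writing the SVD $G = V_1 \Sigma V_2^*$ with $\Sigma = \diag(\sigma)$ and $m = \min(p,q)$, the closed form $G^{(t)} = V_2 \Sigma^{2^{t-1}} V_2^*$ holds for $t \geq 2$, while $G^{(1)} = G$. The crucial observation is that, since the Frobenius norm is unitarily invariant and $V_2$ has orthonormal columns, the Frobenius norm of $G^{(t)}$ depends only on the singular values:
\begin{equation*}
\norm{G^{(t)}}_F = \norm{\Sigma^{2^{t-1}}}_F = \left( \sum_{i=1}^m \sigma_i^{2^t} \right)^{1/2} \ .
\end{equation*}
This identity is valid for every $t \geq 1$ (for $t=1$ it is simply $\norm{G}_F = \norm{\Sigma}_F$ by unitary invariance), so no separate treatment of the base case is required even though the explicit factorization only holds for $t \geq 2$.

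Next I would raise this expression to the power $2^{1-t}$ and simplify the exponents. Tracking the powers of two gives
\begin{equation*}
\norm{G^{(t)}}_F^{2^{1-t}} = \left( \sum_{i=1}^m \sigma_i^{2^t} \right)^{2^{-t}} = \norm{\sigma}_{2^t} = s_{2^t}(G) \ ,
\end{equation*}
which identifies the quantity of interest as the Schatten $2^t$-norm of $G$ from Definition~\ref{def:schatten_norm}. With this identification, the claimed upper bound becomes a statement purely about the vector $\sigma$ of singular values.

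Finally I would conclude with the domination $\norm{\sigma}_{2^t} \geq \norm{\sigma}_\infty$. This is elementary: since all $\sigma_i \geq 0$, the sum $\sum_{i=1}^m \sigma_i^{2^t}$ is bounded below by its largest term $\sigma_1^{2^t}$, and the monotone map $x \mapsto x^{2^{-t}}$ preserves the inequality, yielding $s_{2^t}(G) \geq \sigma_1 = \sigma_1(G)$. Chaining the two displays then produces $\sigma_1(G) \leq \norm{G^{(t)}}_F^{2^{1-t}}$, as claimed.

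The argument is essentially bookkeeping, so I do not expect a genuine obstacle. The only places that demand care are the consistent handling of the three exponents $2^{t-1}$, $2^t$, and $2^{1-t}$, and verifying that the $t=1$ case is absorbed by the Frobenius-norm identity rather than requiring the explicit factorization, which fails at $t=1$ because $V_1 \neq V_2$ in general.
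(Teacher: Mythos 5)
Your proof is correct, but it takes a slightly different route from the paper's. The paper never touches the SVD for this bullet: it iterates the identity $\sigma_1(A)^2=\sigma_1(A^*A)$ to get $\sigma_1(G)=\sigma_1(G^{(t)})^{2^{1-t}}$ exactly, and then applies the single inequality $\sigma_1(G^{(t)})\leq\norm{G^{(t)}}_F$ to the iterate. You instead diagonalize $G^{(t)}=V_2\Sigma^{2^{t-1}}V_2^*$, identify $\norm{G^{(t)}}_F^{2^{1-t}}=\norm{\sigma}_{2^t}=s_{2^t}(G)$ as the Schatten $2^t$-norm of Definition~\ref{def:schatten_norm}, and finish with the monotonicity $\norm{\sigma}_{2^t}\geq\norm{\sigma}_\infty$. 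The two arguments rest on the same underlying fact (Frobenius dominates spectral, i.e.\ $\sigma_1\leq(\sum_i\sigma_i^2)^{1/2}$), but your version is essentially the computation the paper defers to its proof of the \emph{convergence} bullet, where it derives $\norm{G^{(t)}}_F^2=\sum_i|\sigma_i|^{2^t}$; so your approach has the small advantage of unifying the upper-bound and convergence claims under one Schatten-norm identity, at the cost of invoking the SVD and checking (as you correctly do) that the $t=1$ case is covered by unitary invariance alone. Your bookkeeping of the exponents $2^{t-1}$, $2^t$, $2^{1-t}$ is accurate throughout.
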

\begin{proof}
We start using this property on the largest singular value: 
\begin{align*}
    \sigma_1(G)^2 &= \sigma_1(G^* G) \\
    \sigma_1(G) &= {\sigma_1(G^* G)}^{\frac{1}{2}} \ . \\
\text{Iterating $t$ times: \ \ }
    \sigma_1(G) &= {\sigma_1(G^{(t)})}^{2^{1 -t}}\\
    \sigma_1(G) &\leq {\norm{G^{(t)}}_F}^{2^{1-t}} \ .
\end{align*}
\end{proof}


\begin{prop*}
The bound sequence converges to the spectral norm:
\begin{equation*}
\norm{G^{(t)}}_F^{2^{1-t}} \underset{t \to +\infty}{\longrightarrow} \sigma_1(G) \ \ .
\end{equation*}
\end{prop*}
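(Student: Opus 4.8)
The plan is to reduce the claimed limit to the elementary fact that $\ell_p$-norms of a fixed finite vector converge to the $\ell_\infty$-norm as $p \to \infty$. The work has already been set up in the preamble of this proof section: since $G = V_1 \Sigma V_2^*$ gives $G^{(t)} = V_2 \Sigma^{2^{t-1}} V_2^*$ for $t \geq 2$, the singular values of $G^{(t)}$ are exactly $\sigma_i^{2^{t-1}}$. So the first step is to make the Schatten-norm identity explicit by computing the Frobenius norm of the iterate:
\begin{equation*}
\norm{G^{(t)}}_F^{2^{1-t}} = \left( \sum_{i=1}^{m} \sigma_i^{2^{t}} \right)^{2^{-t}} = \norm{\sigma}_{2^t} = s_{2^t}(G) \ .
\end{equation*}
This recovers exactly the relation stated just before the theorem and shows that the whole question is about the convergence $\norm{\sigma}_{p} \to \norm{\sigma}_\infty$ along the subsequence $p = 2^t \to +\infty$.

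Next I would establish the two-sided sandwich on $\norm{\sigma}_p$. The lower bound $\norm{\sigma}_\infty \leq \norm{\sigma}_p$ is immediate from bounding the single term $\sigma_1^p$ by the full sum $\sum_i \sigma_i^p$, and in fact it is precisely the first (upper-bound) half of Theorem~\ref{thm:main_result} already proved above. For the upper bound, I would bound every term of the sum by the maximum, giving $\sum_i \sigma_i^p \leq m \, \sigma_1^p$ and hence $\norm{\sigma}_p \leq m^{1/p}\,\norm{\sigma}_\infty$. Combining, for $p = 2^t$,
\begin{equation*}
\sigma_1(G) = \norm{\sigma}_\infty \leq \norm{G^{(t)}}_F^{2^{1-t}} \leq m^{2^{-t}} \, \sigma_1(G) \ .
\end{equation*}

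Finally, since $m = \min(p,q)$ is a fixed finite integer, $m^{2^{-t}} \to 1$ as $t \to +\infty$, so the squeeze theorem forces $\norm{G^{(t)}}_F^{2^{1-t}} \to \sigma_1(G)$, which is the claim. There is no genuine obstacle here: the only care needed is to note that the degenerate case $\sigma_1 = 0$ (i.e. $G = 0$) is trivial, and otherwise the sandwich is sharp and the limit immediate. The more delicate part of the theorem is the stated rate of convergence (Q-superlinear of order $2-\epsilon$), which is a separate quantitative refinement of this same sandwich and would be treated afterward; the present statement only asserts convergence, for which the $m^{2^{-t}} \to 1$ argument suffices.
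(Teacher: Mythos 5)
Your proposal is correct and follows essentially the same route as the paper: both reduce the claim to the identity $\norm{G^{(t)}}_F^{2^{1-t}} = \norm{\sigma}_{2^t}$ via the SVD of the Gram iterates and then invoke the convergence of $\ell_p$-norms to the $\ell_\infty$-norm. The only difference is that you spell out the standard sandwich $\norm{\sigma}_\infty \leq \norm{\sigma}_{2^t} \leq m^{2^{-t}}\norm{\sigma}_\infty$ explicitly, whereas the paper takes this classical limit for granted.
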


\begin{proof}
For $t \geq 1$,
\begin{align*}
 \norm{G^{(t)}}_F^2 &= \text{Tr}({G^{(t)}}^* G^{(t)}) \\
 \norm{G^{(t)}}_F^2 &= \text{Tr}(G^{(t+1)})\\
 \norm{G^{(t)}}_F^2 &= \text{Tr}(V_2 |\Sigma|^{2^{t}} V_2^*)\\
 \norm{G^{(t)}}_F^2 &= \text{Tr}(V_2^* V_2 |\Sigma|^{2^{t}} )\\
 \norm{G^{(t)}}_F^2 &= \text{Tr}(|\Sigma|^{2^{t}}) \\
 \norm{G^{(t)}}_F^2 &= \sum_{i=1}^m |\sigma_i|^{2^{t}}
\end{align*}
\begin{align*}
    ({||G^{(t)}||_F^2})^{2^{-t}} &= \left({\sum_{i=1}^m |\sigma_i|^{2^{t}}}\right)^{2^{-t}}\\
    ||G^{(t)}||_F^{2^{1-t}} &= \norm{\sigma}_{2^{t}} \underset{t \to +\infty}{\longrightarrow} \norm{\sigma}_\infty\\
\end{align*}
and $\norm{\sigma}_\infty = \sigma_1(G)$ .
\end{proof}


\begin{defn}
Q-convergences.\\ 
    Suppose sequence $(u_t)_{t \in \mathbb{N}}$ converges to $\hat{u}$, it converges Q-linearly if
    $\underset{t \to +\infty}{\text{lim}} \frac{|u_{n+1} - \hat{u} |}{|u_{n} - \hat{u} |} = \mu < 1$. \\
    It converges Q-superlinearly if $\mu = 0$.
    The number $\mu$ is called the rate of convergence. \\
    It converges Q-quadratically if  $\underset{t \to +\infty}{\text{lim}} \frac{|u_{n+1} - \hat{u} |}{|u_{n} - \hat{u} |^2} = 0$ .
\end{defn}


\begin{prop*}
The convergence of bound sequence is Q-superlinear of order $2 - \epsilon$, for $\epsilon$  arbitrary small (quasi quadratic convergence in practice).
When $\sigma_1(G) = \sigma_2(G)$, convergence is Q-linear of rate $\frac{1}{2}$.
\end{prop*}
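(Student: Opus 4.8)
The plan is to work directly with the closed form already established, namely $u_t := \norm{G^{(t)}}_F^{2^{1-t}} = \norm{\sigma}_{2^t} = \bigl(\sum_{i=1}^m \sigma_i^{2^t}\bigr)^{2^{-t}}$, and to carry out a careful asymptotic expansion of the error $e_t := u_t - \sigma_1$ as $t \to +\infty$. Writing $p = 2^t$ and factoring out the leading singular value, I would start from
\begin{equation*}
u_t = \sigma_1 \Bigl( 1 + \sum_{i \geq 2} (\sigma_i/\sigma_1)^{p} \Bigr)^{1/p},
\end{equation*}
so that the whole analysis reduces to understanding the inner sum $S_p := \sum_{i \geq 2} (\sigma_i/\sigma_1)^p$ and the effect of the outer exponent $1/p$. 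The two regimes of the statement correspond exactly to whether $S_p \to 0$ or not.

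For the generic case $\sigma_1 > \sigma_2$, each ratio $\sigma_i/\sigma_1 < 1$ for $i \geq 2$, so $S_p \to 0$ with leading behaviour $S_p \sim m_2 (\sigma_2/\sigma_1)^p$, where $m_2$ is the multiplicity of $\sigma_2$. Using $(1+x)^{1/p} = 1 + x/p + o(x/p)$ then gives
\begin{equation*}
e_t \sim \sigma_1 \frac{S_p}{p} \sim \sigma_1 m_2 \frac{(\sigma_2/\sigma_1)^{2^t}}{2^t}.
\end{equation*}
The cleanest way to extract the order is to pass to logarithms: $\log e_t = 2^t \log(\sigma_2/\sigma_1) - t\log 2 + O(1)$, whose dominant term is $-c\,2^t$ with $c := -\log(\sigma_2/\sigma_1) > 0$. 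Substituting into $e_{t+1}/e_t^{\,2-\epsilon}$ and collecting exponents produces a factor $(\sigma_2/\sigma_1)^{\epsilon 2^t}$ multiplied by a term growing only like $2^{(1-\epsilon)t}$; since the doubly-exponential decay dominates for every $\epsilon > 0$, the ratio tends to $0$, establishing Q-superlinear convergence of order $2 - \epsilon$. The same computation with $\epsilon = 0$ leaves the factor $2^{t-1}$, which diverges, which explains why the order is strictly below $2$ and why the claim is phrased as ``order $2-\epsilon$ for $\epsilon$ arbitrarily small''.

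For the degenerate case $\sigma_1 = \sigma_2$ the leading ratios equal $1$, so the picture changes. Letting $k$ be the multiplicity of $\sigma_1$, the inner bracket tends to $k$ rather than $1$, and the error is governed by the outer exponent alone: from $k^{1/p} = 1 + (\log k)/p + O(p^{-2})$ I get
\begin{equation*}
e_t \sim \sigma_1 \frac{\log k}{2^t},
\end{equation*}
so that $e_{t+1}/e_t \to 2^t/2^{t+1} = \tfrac12$, giving Q-linear convergence with rate $\tfrac12$.

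The main obstacle I expect is not any single estimate but the bookkeeping of sub-dominant terms: I must track the polynomial-in-$t$ factor $1/2^t$ precisely enough to distinguish order $2-\epsilon$ from exact order $2$, while simultaneously confirming it is negligible against the doubly-exponential factor in the superlinear case. Working throughout with $\log e_t$ rather than $e_t$ isolates the dominant $2^t$-scale behaviour and makes both the order-$(2-\epsilon)$ claim and its failure at $\epsilon = 0$ follow from elementary exponent arithmetic.
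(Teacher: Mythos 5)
Your proposal is correct and follows essentially the same route as the paper's proof: both reduce the error to the first-order expansion $u_t - \sigma_1 \sim \sigma_1\, 2^{-t}\sum_{i\ge 2}(\sigma_i/\sigma_1)^{2^t}$ and then observe that in the ratio $e_{t+1}/e_t^{\,2-\epsilon}$ the doubly-exponential factor $(\sigma_2/\sigma_1)^{\epsilon 2^t}$ dominates the $2^{(1-\epsilon)t}$ term, with the same failure at $\epsilon=0$. Your explicit tracking of multiplicities and your derivation of $e_t \sim \sigma_1 (\log k)/2^t$ in the degenerate case $\sigma_1=\sigma_2$ are slightly more detailed than the paper's one-line treatment, but the argument is the same.
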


\begin{proof}
To analyze convergence we compute the quotient and look at its limit to conclude on convergence type.

\begin{align*}
    \norm{\sigma}_{2^t} - \norm{\sigma}_\infty &= \left(\sum_{i=1}^m |\sigma_i|^{2^{t}} \right)^{2^{-t}} - \norm{\sigma}_\infty \\
    \norm{\sigma}_{2^t} - \norm{\sigma}_\infty &= 
    \left(
        \left(\sum_{i=1}^m \left|\frac{\sigma_i}{\sigma_1} \right|^{2^{t}} \right)^{2^{-t}} 
        - 1
    \right)
    \norm{\sigma}_\infty \ .
\end{align*}

\begin{align*}
    \text{We define} \ \ \
    \epsilon_t &=\left(\sum_{i=1}^m \left|\frac{\sigma_i}{\sigma_1}\right|^{2^{t}} \right)^{2^{-t}} - 1
    \\
    \epsilon_t &= \exp{
        \left(
            2^{-t}
            \log\left(\sum_{i=1}^m \left|\frac{\sigma_i}{\sigma_1}\right|^{2^{t}} \right) 
        \right)}
        - 1 \\
         \epsilon_t &= \exp{
        \left(
            2^{-t}
            \log\left(1 + \sum_{i=2}^m \left|\frac{\sigma_i}{\sigma_1}\right|^{2^{t}} \right) 
        \right)}
        - 1 \ .
        \\
        \text{With} \
        \alpha = \sum_{i=2}^m \left|\frac{\sigma_i}{\sigma_1}\right|^{2^{t}} \underset{t\to +\infty}{\longrightarrow} 0, \ &\text{using series expansion:} \ \\
        \epsilon_t &\underset{t \to +\infty}{=} \exp{
        \left(
            2^{-t}
            \left( \alpha + o(\alpha) \right) 
        \right)}
        - 1 \\
        \epsilon_t &\underset{t \to +\infty}{=} 
            2^{-t} \alpha + o(2^{-t}\alpha) \ .
\end{align*}

For $q > 0$,
\begin{align*}
    \frac{\epsilon_{t+1}}{\epsilon_{t}^q} &\underset{t \to +\infty}{\sim} 
    \frac
    {2^{-(t+1)}
        \sum_{i=2}^m \left|\frac{\sigma_i}{\sigma_1}\right|^{2^{t+1}}
    }
    {2^{-tq} 
    \left(
        \sum_{i=2}^m \left|\frac{\sigma_i}{\sigma_1}\right|^{2^{t}}
    \right)^q}\\
    \frac{\epsilon_{t+1}}{\epsilon_{t}^q} &\underset{t \to +\infty}{\sim} 
    2^{t(q -1) - 1}
    \frac
    {
        \left|\frac{\sigma_2}{\sigma_1}\right|^{2^{t+1}}
    }
    { 
        \left|\frac{\sigma_2}{\sigma_1}\right|^{q2^{t}}
    }\\
    \frac{\epsilon_{t+1}}{\epsilon_{t}^q} &\underset{t \to +\infty}{\sim} 
    2^{t(q -1) - 1}
    \left|\frac{\sigma_2}{\sigma_1}\right|^{2^{t}(2 - q)} \ .
\end{align*}

When $\sigma_1 > \sigma_2$, for $0 < q < 2$,
$\underset{t \to +\infty}{\lim} \frac{\epsilon_{t+1}}{\epsilon_{t}^q} = 0$.
Hence convergence is Q-superlinear of order $q= 2-\epsilon$, where $\epsilon$ is arbitrarily small, providing a quasi-quadratic convergence in practice.

For the special case $\sigma_1 = \sigma_2$, convergence is Q-linear ($q=1$) of rate $\frac{1}{2}$.

\end{proof}


\subsection{Proof of Proposition \ref{prop:gradient_gram_iteration}}
\label{section:gradient_gram_iteration}

To implement explicit differentiation instead of auto-differentiation, we need to compute the gradient of the spectral bound given by Gram iteration:
$||G^{(t)}||_F^{2^{1-t}}$ .

Compute the gradient of $G \mapsto ||G^{(t)}||_F^{2^{1-t}}$ .

For $t=1$, $G^{(1)} = G$,
\begin{equation*}
\frac{\partial \norm{G}_F}{\partial G} = \frac{G}{\norm{G}_F} \ .
\end{equation*}

For $t \geq 2$, note that $G^{(t)} = (G^* G)^{2^{(t-2)}}$ .
First, we calculate:
\begin{equation*}
 \frac{\partial \norm{G^{(t)}}^2_F}{\partial G} = 2^t G ( G^* G)^{2^{t-1} -1} \ . 
\end{equation*}
Then,
\begin{align*}
 \frac{\partial (\norm{G^{(t)}}^2_F)^{1/2^t}}{\partial G} &= \frac{1}{2^t}(\norm{G^{(t)}}^2_F)^{1/2^t - 1} 2^t ~ G ( G^* G)^{2^{t-1} -1}   \\
 \frac{\partial (\norm{G^{(t)}}^2_F)^{1/2^t}}{\partial G} &=(\norm{G^{(t)}}_F)^{1/2^{t-1} - 2} ~ G ( G^* G)^{2^{t-1} -1}   \\
  \frac{\partial (\norm{G^{(t)}}^2_F)^{1/2^t}}{\partial G} &=\frac{G ( G^* G)^{2^{t-1} -1}}{(\norm{G^{(t)}}_F)^{2(1 - 2^{-t})}} \ .
\end{align*}

Finally using the chain rule, the gradient is given for $t \geq 1$:
\begin{equation}
    \label{eq:gradient_formulae_apendix}
     \frac{\partial (\norm{G^{(t)}}^2_F)^{1/2^t}}{\partial G} =\frac{G ( G^* G)^{2^{t-1} -1}}{(\norm{G^{(t)}}_F)^{2(1 - 2^{-t})}} \ .
\end{equation}


To avoid overflow in differentiation, one can rescale $G$ at the start of the gradient calculation. As the spectral norm $\norm{G}_2$ has been computed during the forward pass, it can be used for rescaling. Then, calculate the gradient with Equation~(\ref{eq:gradient_formulae_apendix}), and finally unscale the result by the factor: $1 / \norm{G}_2$.


\section{Approximations for convolutional layers}
\label{section:approx}

\subsection{Approximation for lower input spatial size $n_0 \leq n$}
\label{section:approx_n0}

A bound on spectral norm can be produced with Gram iteration as in Proposition~\ref{prop:upper_bound_conv}:
\begin{equation*}
  \sigma_1(W)
  = \max_{1 \leq i \leq n^2} \sigma_1(D_i) 
  \leq \max_{1 \leq i \leq n^2} \norm{{D}^{(t)}_i}_F^{2^{1-t}} \ .
\end{equation*}

To consider the very large input spatial size $n$, we can use our bound by approximating the spatial size for $n_0 \leq n$. It means we pad the kernel $K$ to match the spatial dimension $n_0 \times n_0$, instead of $n \times n$. To compensate for the error committed by the sub-sampling approximation, we multiply the bound by a factor $\alpha$. 
The work of \citet{pfister2019bounding} analyzes the quality of approximation depending on $n_0$ and gives an expression for factor $\alpha= \frac{2\left\lfloor k/2 \right \rfloor}{n_0}$ to compensate and ensure to remain an upper-bound, as studied in \cite{araujo2021lipschitz}.


\subsection{Approximation between circular and zero paddings}
\label{section:approx_circ}

In CNN architectures, zero padding is more commonly used than circular padding. To evaluate the potential approximation gap between these two methods, we compared our approach with the zero padding approach of \cite{Ryu19Plug} in terms of error. As shown in Figure~\ref{fig:times_estimation_error_vary_cin_cout} and Figure~\ref{fig:times_estimation_error_vary_input_size} of our paper, our results demonstrate that the error with respect to the zero padding case remains low. However, for high kernel sizes, as depicted in Figure~\ref{fig:times_estimation_error_vary_kernel_size}, our approach may not be as accurate as that of \cite{Ryu19Plug}. It is important to note that the typical kernel size for deep learning applications is relatively small, usually lower than 7.

An empirical comparison between zero and circular padding can be obtained by examining the table of Lipschitz constant estimations for all convolutional layers of ResNet18 in Table~\ref{tab:lipschitz_resnet18} of Appendix~\ref{appendix:table_convolutinonal_lip}.
We compare the values given by both methods: zero padding \cite{Ryu19Plug} used as reference and, circular padding \cite{sedghi2019singular}/Ours. We observe that the Lipschitz constant values obtained from both methods are quite similar. This provides a strong assessment of the approximation gap between circular and zero padding for convolutional layers encountered in ResNet.

\cite{yi2020asymptotic} provides an asymptotic bound on the singular values of circular convolution and zero padding convolutions in relation to the input spatial size. This, along with the findings from \cite{araujo2021lipschitz}, can aid in establishing an upper bound on the spectral norm of Toeplitz matrices, which represents zero padding convolutions, by leveraging the spectral norm of circulant matrices, which represents circular padding ones. Based on empirical results, the difference in the Lipschitz constant between circular and zero padding increases with kernel size and decreases with input size. Specifically, Figure~\ref{fig:times_estimation_error_vary_input_size} shows that as the input spatial size increases, the gap between \cite{Ryu19Plug} and \cite{sedghi2019singular}/our approach decreases, while the gap widens as kernel size increases.


\newpage

\section{Additional figures and tables for computation of spectral norms on GPU}

\begin{figure}[h]
    \centering
     \includegraphics[width=0.6\textwidth]{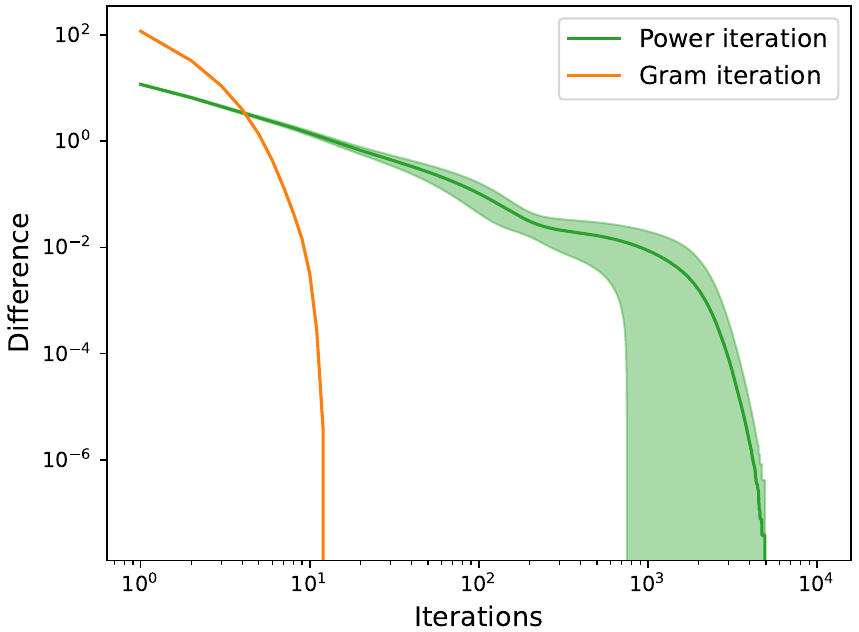}
     \caption{Convergence plot in log-log scale for spectral norm computation, comparing Power iteration and Gram iteration, one standard deviation shell is represented in a light color.
    Difference at each iteration is defined as $| {\sigma_1}_{\text{method}} - {\sigma_1}_{\text{ref}} |$ where reference is taken from PyTorch.
    Gram iteration converges to numerical 0 in less than 12 iterations while Power iteration requires up to 5,000.}
\label{fig:convergence_plot_gram_vs_power_itreration_5000_iters}
\end{figure}

\begin{figure}[h!]
    \centering
     \includegraphics[width=0.6\textwidth]{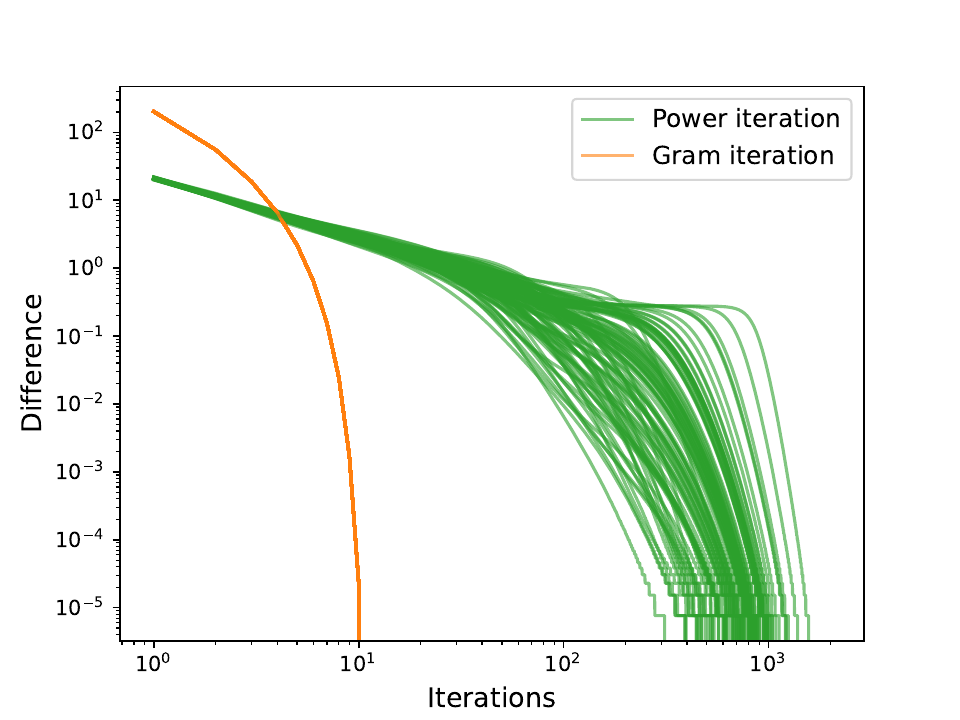}
    \caption{
    Convergence plot in log-log scale for spectral norm computation, comparing power iteration and Gram iteration, the same matrix is used here, each line corresponds to one run of Power iteration and Gram iteration. 
    The difference is defined as 
    $|{\sigma_1}_{\text{method}} - {\sigma_1}_{\text{ref}}|$ where reference is taken from PyTorch.
    We see that Power iteration has inherent randomness by design whereas Gram iteration is fully deterministic. 
    }
\label{fig:convergence_plot_gram_vs_power_itreration_same_mat}
\end{figure}

\begin{table}[h]
\caption{Numerical values associated to Figure~\ref{fig:comparaison_methods_compute_spectral_norm_matrix}: error ratios and computational times of methods for spectral norm computation. Error ratio is defined as ${\sigma_1}_{\text{method}} / {\sigma_1}_{\text{ref}} - 1$.}
\centering
\footnotesize
\begin{tabular}{lrrr}\toprule
\textbf{Method} &\textbf{Error ratio} &\textbf{Computational time (s)} \\\midrule
Reference & 0 &1.47e+0 \\
PI (10 iters) &-3.27e-2 $\pm$ 3.55e\text{-}3 &1.12e-3 \\
PI (100 iters) &-1.41e-3 $\pm$ 1.75e\text{-}3 &1.03e-2 \\
PI (1000 iters) &-2.43e-6 $\pm$ 4.01e\text{-}6 &1.04e-1 \\
Ours GI (10 iters) &6.75e-6 &1.83e-3 \\
Ours GI (11 iters) &4.73e-8 &1.99e-3 \\
Ours GI (12 iters) &4.33e-12 &2.13e-3 \\
\bottomrule
\end{tabular}
\label{tab:comparaison_methods_compute_spectral_norm_matrix}
\end{table}

\begin{table}[!htp]\centering
\caption{Times and difference to reference,
for different $p \times p$ matrices, averaged 10 times. Error is defined as $\sigma_{\text{method}} - \sigma_{\text{ref}}$.  To have comparable time and precision, $100$ iterations were taken for Power iteration and $10$ for Gram iteration. Gram iteration is overall more competitive than Power iteration for all matrix dimensions considered.
}
\footnotesize
\begin{tabular}{lrrrrrr}
\toprule
\textbf{Dimension p} & \textbf{PI time (s)} & \textbf{GI time (s)} & \textbf{PI error} & \textbf{GI error} & \textbf{Ref} \\
\midrule
10 &0.0093 &0.0012 &0.0006 &0 &5.28 \\
100 &0.0083 &0.0010 &-0.0231 &0 &19.31 \\
1000 &0.0093 &0.0017 &-0.0426 &0.0005 &62.95 \\
10000 &0.016 &0.0019 &-0.6004 &0.0037 &199.77 \\
50000 &0.017 &0.0040 &-1.70 &0.1 &447.21 \\
\bottomrule
\end{tabular}
\label{tab:comparaison_methods_compute_spectral_norm_dimension}
\end{table}


\newpage

\section{Lipschitz constant of CNN}
\label{section:lip_cnn}

Works of \cite{tsuzuku2018lipschitz, cisse2017parseval} studied  the Lipschitz constant of neural network by proposing for each kind of layer a formula or method to bound its Lipschitz constant and then  the network overall Lipschitz constant using Equation~(\ref{eq:prod_lip_bound}).

A dense layer can be represented as matrix $W \in \mathbb{R}^{m \times n}$. The Lipschitz constant for the $\ell_2$-norm equals to $\sigma_1(W)$. One can use SVD, power iteration, or Gram iteration to compute it.
Most of the activation functions used in deep learning such as ReLu, sigmoid, and hyperbolic tangent are $1$-Lipschitz.
As mentioned in Appendix D.3 of \cite{tsuzuku2018lipschitz} the max-pooling can be seen as a max operation followed by a pooling operation.
The Lipschitz constant of the max function is bounded by $1$, hence bound on the Lipschitz constant of max-pooling is bounded by the Lipschitz constant of the pooling operation. The Lipschitz constant of the pooling operation is bounded by the maximum number of repetitions of a pixel in the input. For a kernel of spatial size $k$ the number of maximum repetitions is $k^2$, taking into account the spatial size $n$ of the input and the stride the Lipschitz constant is bounded by \cite{tsuzuku2018lipschitz}:
\begin{equation*}
\left\lceil \frac{\min{(k, n-k+1)}}{\text{stride}} \right\rceil^2 \ .
\end{equation*}
Lipschitz constant of batch normalization is estimated in Appendix C.4.1 of \cite{tsuzuku2018lipschitz}: 
\begin{equation*}
L=\max_i \{ \left| \gamma_i \right| / \sqrt{\sigma_i^2 + \epsilon} \} \ ,
\end{equation*}
where, in this equation, $\gamma_i$ and $\sigma_i^2$ are respectively the scale parameter and the variance of the batch.
This Lipschitz constant is rarely considered in the global constant of networks.

Residual connections are present in ResNet \cite{he2016deep} architectures. In order to compute a bound on the Lipschitz constant of such networks, one can use the following rules as in \cite{tsuzuku2018lipschitz} and \cite{gouk2021regularisation}. For $g, f$ functions of Lipschitz constant of $L_1, L_2$ respectively, the Lipschitz constant for an addition $f + g$ is bounded by $L_1 + L_2$, and for composition $f \circ g$, it is bounded by $L_1 L_2$.
We call $\Phi_l$ the $l$-th composition of layers separated by residual connections, \ie $\Phi_l = \phi_{l, 1} \circ \phi_{l, 2} \circ \dots$ and 
$\Lip(\Phi_l) \leq \Lip(\phi_{l, 1}) ~\Lip(\phi_{l, 2}) \dots$.
We call $L_l$ the Lipschitz constant of the truncated network before block $\Phi_l$,
then we have $L_{l+1} \leq L_l + \Lip{(\Phi_l)}$.


\newpage

\section{Additional table for estimation on convolutional layers of CNNs}
\label{appendix:table_convolutinonal_lip}
\begin{table*}[h]
\caption{Comparison of Lipschitz bounds for all convolutional layers ResNet18, reference for real Lipschitz bound is given by \cite{Ryu19Plug} method. We observe that our method gives close value to \citet{sedghi2019singular} up to two decimal digits while being significantly faster. We remark that convolution filter $512 \times 512 \times 1 \times 1$ \citet{singla2021fantastic} value $2.03$ underestimates reference value $2.05$.}
\centering
\scriptsize
\begin{tabular}{lllllllllllll}\toprule
\textbf{Filter Shape} &\multicolumn{5}{c}{\textbf{Lipschitz Bound}} & &\multicolumn{5}{c}{\textbf{Running Time (s)}} \\\cmidrule{2-6}\cmidrule{8-12}
&Ours &Singla2021 &Araujo2021 &Sedghi2019 &Ryu2019 & &Ours &Singla2021 &Araujo2021 &Sedghi2019 &Ryu2019 \\\cmidrule{2-6}\cmidrule{8-12}
64 $\times$ 3 $\times$ 7 $\times$ 7 &15.91 &28.89 &22.25 &15.91 &15.91 & &0.0024 &0.0507 &0.0003 &18.06 &0.7579 \\\cmidrule{1-6}\cmidrule{8-12}
64 $\times$ 64 $\times$ 3 $\times$ 3 & 6.00 &9.32 &17.59 &6.00 &6.00 & &0.0014 &0.0264 &0.0003 &7.47 &1.07 \\\cmidrule{1-6}\cmidrule{8-12}
64 $\times$ 64 $\times$ 3 $\times$ 3 &5.34 &6.28 &15.31 &5.34 &5.33 & &0.0015 &0.025 &0.0012 &7.29 &1.07 \\\cmidrule{1-6}\cmidrule{8-12}
64 $\times$ 64 $\times$ 3 $\times$ 3 &7.00 &8.71 &16.46 &7.00 &7 & &0.0011 &0.026 &0.0003 &7.27 &1.08 \\\cmidrule{1-6}\cmidrule{8-12}
64 $\times$ 64 $\times$ 3 $\times$ 3 &3.82 &5.40 & 13.74 &3.81 &3.82 & &0.0014 &0.025 &0.0003 &7.19 &1.08 \\\cmidrule{1-6}\cmidrule{8-12}
128 $\times$ 64 $\times$ 3 $\times$ 3 & 4.71 &5.98 &16.35 &4.71 &4.71 & &0.0014 &0.026 & 0.0012 & 9 & 0.473 \\\cmidrule{1-6}\cmidrule{8-12}
128 $\times$ 128 $\times$ 3 $\times$ 3 &5.72 &7.21 & 23.58 &5.72 &5.72 & &0.0011 &0.026 &0.0013 &4.85 &0.78 \\\cmidrule{1-6}\cmidrule{8-12}
128 $\times$ 64$\times$ 1 $\times$ 1 &1.91 &1.91 &6.39 &1.91 &1.91 & &0.0014 & 0.0333 &0.0001 &9.25 &0.042 \\\cmidrule{1-6}\cmidrule{8-12}
128 $\times$ 128 $\times$ 3 $\times$ 3 &4.39 &6.78 & 21.93 &4.39 &4.41 & &0.0011 &0.026 &0.0003 &4.844 &0.78 \\\cmidrule{1-6}\cmidrule{8-12}
128 $\times$ 128 $\times$ 3 $\times$ 3 &4.89 &7.57 & 19.79 &4.89 & 4.88 & & 0.0014 & 0.027 &0.0004 &4.78 & 0.78 \\\cmidrule{1-6}\cmidrule{8-12}
256 $\times$ 128 $\times$ 3 $\times$ 3 &7.39 &8.45 &35.86 & 7.39 &7.39 & &0.0014 &0.026 &0.0017 &5.36 & 0.46 \\\cmidrule{1-6}\cmidrule{8-12}
256 $\times$ 256 $\times$ 3 $\times$ 3 &6.58 &8.03 &37.98 &6.58 &6.58 & &0.0014 &0.026 &0.0025 &3.46 &0.85 \\\cmidrule{1-6}\cmidrule{8-12}
256 $\times$ 128$\times$ 1 $\times$ 1 &1.21 &1.21 &5.97 &1.21 & 1.21 & &0.0014 & 0.036 &0.0001 &5.96 & 0.04 \\\cmidrule{1-6}\cmidrule{8-12}
256 $\times$ 256 $\times$ 3 $\times$ 3 &6.36 &7.57 &30.07 &6.36 &6.35 & &0.001 &0.026 &0.0004 &3.48 &0.85 \\\cmidrule{1-6}\cmidrule{8-12}
256 $\times$ 256 $\times$ 3 $\times$ 3 & 7.68 &9.17 &29.19 & 7.68 &7.67 & & 0.0011 &0.027 &0.0004 &3.48 & 0.85 \\\cmidrule{1-6}\cmidrule{8-12}
512 $\times$ 256 $\times$ 3 $\times$ 3 &9.99 &11 &46.24 &9.99 &9.92 & &0.0011 &0.026 &0.004 &3.55 &0.45 \\\cmidrule{1-6}\cmidrule{8-12}
512 $\times$ 512 $\times$ 3 $\times$ 3 &9.09 &10.45 &47.59 &9.09 &9.04 & &0.0014 &0.026 &0.0001 &2.8 &0.79 \\\cmidrule{1-6}\cmidrule{8-12}
512 $\times$ 256 $\times$ 1 $\times$ 1 &2.05 &2.03 &11.87 &2.05 &2.05 & &0.0012 &0.024 &0.0001 &3.77 &0.052 \\\cmidrule{1-6}\cmidrule{8-12}
512 $\times$ 512 $\times$ 3 $\times$ 3 &17.60 &18.37 &59.04 &17.60 &17.5 & &0.001 &0.027 &0.0005 &2.79 &0.79 \\\cmidrule{1-6}\cmidrule{8-12}
512 $\times$ 512 $\times$ 3 $\times$ 3 &7.48 & 7.6 & 57.33 &7.48 &7.43 & &0.0014 &0.027 &0.0006 &2.83 &0.79 \\
\bottomrule
\end{tabular}
\label{tab:lipschitz_resnet18}
\end{table*}


\section{Additional table and figure for regularization of convolutional layers of ResNet}

The naive auto differentiation indeed uses a lot amount of GPU memory, but the explicit differentiation using Proposition~\ref{prop:gradient_gram_iteration} mitigates this matter as the  computational graph for the gradient is smaller, as reported in Table~\ref{tab:mem_gpu}.

\begin{table}[!htp]
\centering
\caption{Differentiation memory footprints in order to perform  regularization for all convolutional layers in the model, for an input image of size $224 \times 224$. The number of Gram iterations is 6.}
\footnotesize
\begin{tabular}{lrrr}\toprule
Model &Autodiff Mem GPU (MB) &Explicit Mem GPU (MB) \\\midrule
ResNet18 &9770 &4322 \\\midrule
ResNet34 &17152 &7088 \\ \midrule
ResNet50 &< 42 000 &27535 \\
\bottomrule
\end{tabular}
\label{tab:mem_gpu}
\end{table}

\begin{figure}[h]
\includegraphics[width=0.95\textwidth, height=0.3\textwidth]{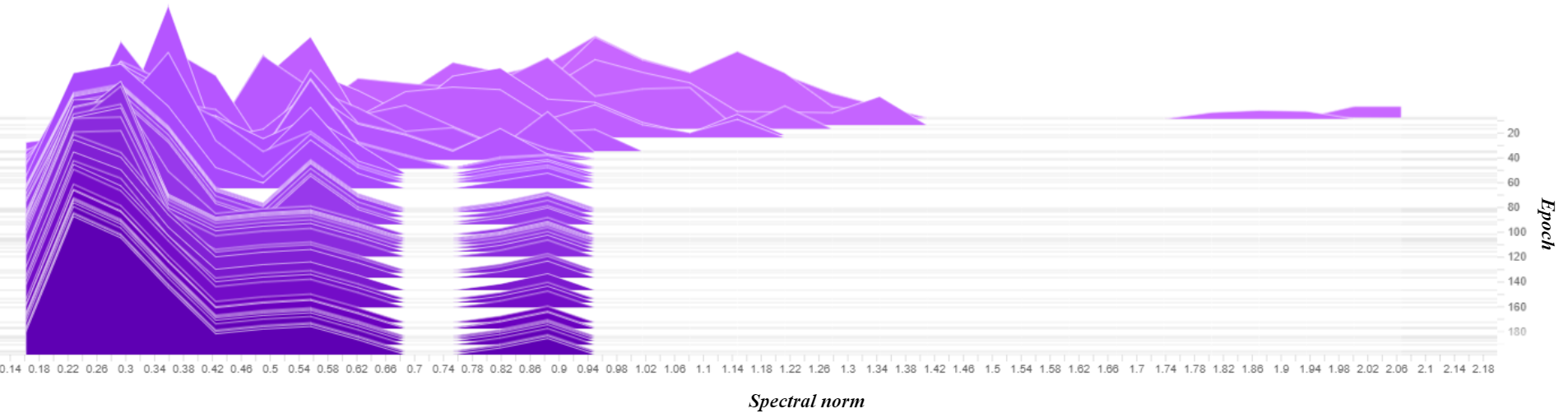}
    \caption{Spectral norm histograms in the function of epoch for  training on CIFAR10 with regularization using Araujo2021.}
    \includegraphics[width=0.7\textwidth, height=0.3\textwidth]{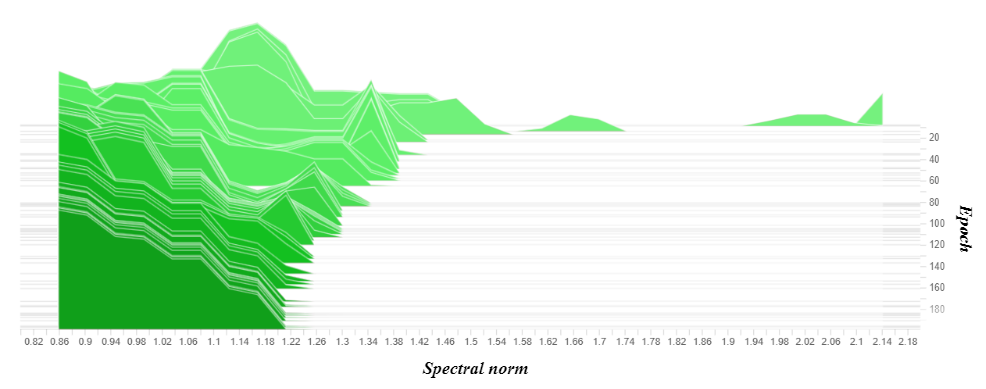}
    \centering
    \caption{Spectral norm histograms in the function of epoch for  training on CIFAR10 with regularization using Singla2021.}
    \includegraphics[width=0.7\textwidth, height=0.3\textwidth]{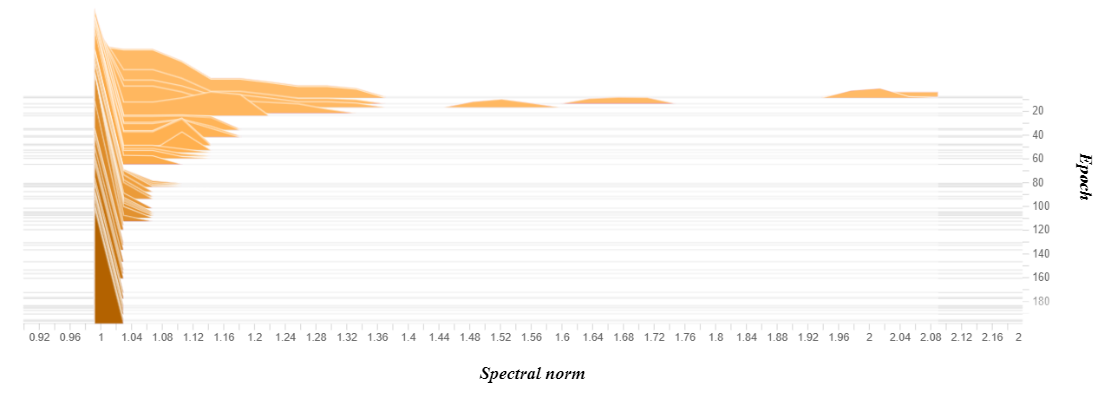}
    \caption{Spectral norm histograms in the function of epoch for  training on CIFAR10 with regularization using our method.}
    \label{fig:spectral_norm_over_epochs_reg_resnet_training_for_bounds}
\end{figure}

\end{document}